\def\eqref#1{equation~\ref{#1}}
\def\1{\bm{1}}
\DeclareMathAlphabet{\mathsfit}{\encodingdefault}{\sfdefault}{m}{sl}
\SetMathAlphabet{\mathsfit}{bold}{\encodingdefault}{\sfdefault}{bx}{n}
\newtheorem{theorem}{Theorem}[section]
\newtheorem*{theorem*}{Theorem}
\newtheorem{prop}[theorem]{Proposition}
\theoremstyle{definition}
\newtheorem*{remark*}{Remark}
\title{Dirichlet Variational Autoencoder}
\author{Weonyoung Joo, Wonsung Lee, Sungrae Park \& Il-Chul Moon \\
Department of Industrial and Systems Engineering \\
Korea Advanced Institute of Science and Technology \\
Daejeon, South Korea \\
\texttt{\{es345,aporia,sungraepark,icmoon\}@kaist.ac.kr}
}
\begin{document}


\maketitle

\begin{abstract}
	This paper proposes Dirichlet Variational Autoencoder (DirVAE) using a Dirichlet prior for a continuous latent variable that exhibits the characteristic of the categorical probabilities.
	To infer the parameters of DirVAE, we utilize the stochastic gradient method by approximating the Gamma distribution, which is a component of the Dirichlet distribution, with the inverse Gamma CDF approximation.
	Additionally, we reshape the component collapsing issue by investigating two problem sources, which are decoder weight collapsing and latent value collapsing, and we show that DirVAE has no component collapsing; while Gaussian VAE exhibits the decoder weight collapsing and Stick-Breaking VAE shows the latent value collapsing. 
	The experimental results show that 1) DirVAE models the latent representation result with the best log-likelihood compared to the baselines; and 2) DirVAE produces more interpretable latent values with no collapsing issues which the baseline models suffer from.
	Also, we show that the learned latent representation from the DirVAE achieves the best classification accuracy in the semi-supervised and the supervised classification tasks on MNIST, OMNIGLOT, and SVHN compared to the baseline VAEs. 
	Finally, we demonstrated that the DirVAE augmented topic models show better performances in most cases.
\end{abstract}

\section{Introduction}\label{introduction}
	A \textit{Variational Autoencoder} (VAE) \citep{Kingma14c} brought success in deep generative models (DGMs) with a Gaussian distribution as a prior distribution \citep{Jiang17,Miao16,Miao17,Srivastava17}.
	If we focus on the VAE, the VAE assumes the prior distribution to be $\mathcal{N} (\bold{0},\boldsymbol{I})$ with the learning on the approximated $\hat{\boldsymbol{\mu}}$ and $\hat{\boldsymbol{\Sigma}}$.
	Also, \textit{Stick-Breaking VAE} (SBVAE) \citep{Nalisnick17} is a nonparametric version of the VAE, which modeled the latent dimension to be infinite using a stick-breaking process \citep{Ishwaran01}.
	
	While these VAEs assume that the prior distribution of the latent variables to be continuous random variables, recent studies introduce the approximations on discrete priors with continuous random variables \citep{Jang17,Maddison17,Rolfe17}.
	The key of these approximations is enabling the backpropagation with the reparametrization technique, or the stochastic gradient variational Bayes (SGVB) estimator, while the modeled prior follows a discrete distribution. 
	The applications of these approximations on discrete priors include the prior modeling of a multinomial distribution which is frequently used in the probabilistic graphical models (PGMs).
	Inherently, the multinomial distributions can take a Dirichlet distribution as a conjugate prior, and the demands on such prior have motivated the works like \citet{Jang17,Maddison17,Rolfe17} that support the multinomial distribution posterior without explicit modeling on a Dirichlet prior. 	

	When we survey the work with explicit modeling on the Dirichlet prior, we found a frequent approach such as utilizing a softmax Laplace approximation \citep{Srivastava17}. 
	We argue that this approach has a limitation from the multi-modality perspective.
	The Dirichlet distribution can exhibit a multi-modal distribution with parameter settings, see Figure \ref{fig_simplex}, which is infeasible to generate with the Gaussian distribution with a softmax function.
	Therefore, the previous continuous domain VAEs cannot be a perfect substitute for the direct approximation on the Dirichlet distribution. 

\begin{figure}[h]
\centering
\includegraphics[width=.9\linewidth]{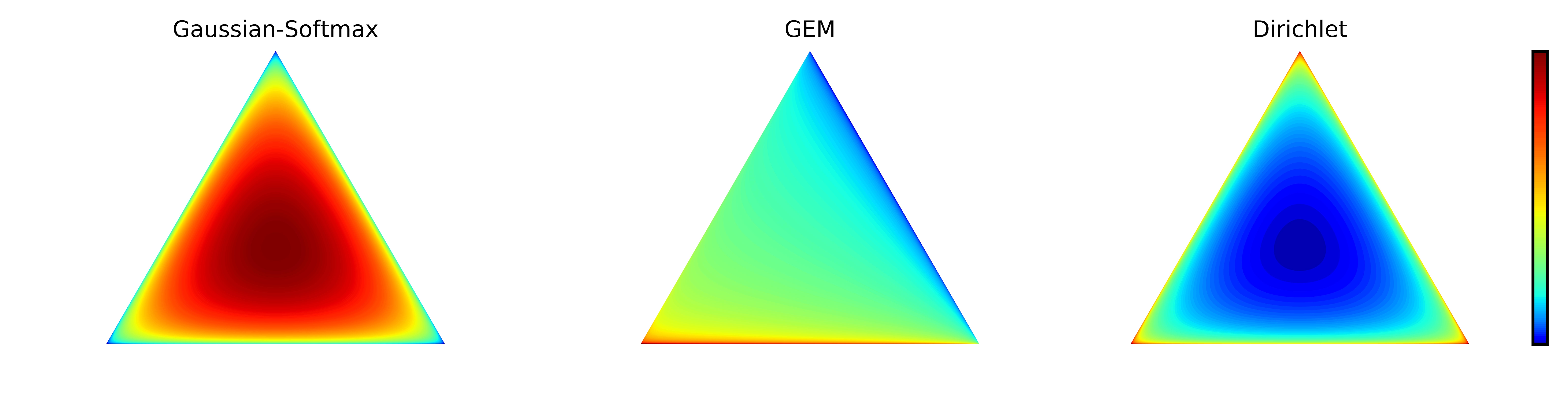}
\caption{Illustrated probability simplex with Gaussian-Softmax, GEM, and Dirichlet distributions. Unlike the Gaussian-Softmax or the GEM distribution, the Dirichlet distribution is able to capture the multi-modality that illustrates multiple peaks at the vertices of the probability simplex.}
\label{fig_simplex}
\end{figure}

	Utilizing a Dirichlet distribution as a conjugate prior to a multinomial distribution has an advantage compared to the usage of a softmax function on a Gaussian distribution.
	For instance, Figure \ref{fig_simplex} illustrates the potential difficulties in utilizing the softmax function with the Gaussian distribution.
	Given the three-dimensional probability simplex, the Gaussian-Softmax distribution cannot generate the illustrated case of the Dirichlet distribution with a high probability measure at the vertices of the simplex, i.e. the \textit{multi-modality} where the necessity was emphasized in \citet{Hoffman16}.
	Additionally, the Griffiths-Engen-McCloskey (GEM) distribution \citep{Pitman02}, which is the prior distribution of the SBVAE, is difficult to model the multi-modality because the sampling procedure of the GEM distribution is affected by the \textit{rich-get-richer} phenomenon, so a few components tend to dominate the weight of the samples.
	This is different from the Dirichlet distribution that does not exhibit such phenomenon, and the Dirichlet distribution can fairly distribute the weights to the components, and the Dirichlet distribution is more likely to capture the multi-modality by controlling the prior hyper-parameter \citep{Blei03}.
	Then, we conjecture that enhanced modeling on Dirichlet prior is still needed 1) because there are cases that the Gaussian-Softmax approaches, or the softmax Laplace approximation, cannot imitate the Dirichlet distribution; and 2) because the nonparametric approaches could be influenced by the biases that the Dirichlet distribution does not suffer from.

	Given these motivations for modeling the Dirichlet distribution with the SGVB estimator, this paper introduces the \textit{Dirichlet Variational Autoencoder} (DirVAE) that shows the same characteristics of the Dirichlet distribution.
	The DirVAE is able to model the multi-modal distribution that was not possible with the Gaussian-Softmax and the GEM approaches. 
	These characteristics allow the DirVAE to be the prior of the discrete latent distribution, as the original Dirichlet distribution is. 
	
	Introducing the DirVAE requires the configuration of the SGVB estimator on the Dirichlet distribution.
	Specifically, the Dirichlet distribution is a composition of the Gamma random variables, so we approximate the inverse Gamma cumulative distribution function (CDF) with the asymptotic approximation.
	This approximation on the inverse Gamma CDF becomes the component of approximating the Dirichlet distribution.
	We compared this approach to the previously suggested approximations, i.e. approaches with the Weibull distribution and with the softmax Gaussian distribution, and our approximation shows the best log-likelihood among the compared approximations. 

	Moreover, we report that we had to investigate the \textit{component collapsing} along with the research on DirVAE. 
	It has been known that the \textit{component collapsing} issue is resolved by the SBVAE because of the meaningful decoder weights from the latent layer to the next layer. 
	However, we found that SBVAE has \textit{latent value collapsing} issue resulting in many near-zero values on the latent dimensions that leads to the incomplete utilization of the latent dimension.
	Hence, we argue that Gaussian VAE (GVAE) suffers from the \textit{decoder weight collapsing}, previously limitedly defined as \textit{component collapsing}; and SBVAE has a problem of the \textit{latent value collapsing}. 
	Finally, we suggest that the definition of \textit{component collapsing} should be expanded to represent both cases of \textit{decoder weight} and \textit{latent value collapsings}. 
	The proposed DirVAE shows neither the near-zero decoder weights nor the near-zero latent values, so the reconstruction uses the full latent dimension information in most cases.
	We investigated this issue because our performance gain comes from resolving the expanded version of the \textit{component collapsing}. 
	Due to the component collapsing issues, the existing VAEs have less meaningful latent values or could not effectively use its latent representation.
	Meanwhile, DirVAE does not have component collapsing due to the multi-modal prior which possibly leads to superior qualitative and quantitative performances.
	We experimentally showed that the DirVAE has more meaningful or disentangled latent representation by image generation and latent value visualizations.
	
	Technically, the new approximation provides the closed-form loss function derived from the evidence lower bound (ELBO) of the DirVAE.
	The optimization on the ELBO enables the representation learning with the DirVAE, and we test the learned representation from the DirVAE in two folds. 
	Firstly, we test the representation learning quality by performing the supervised and the semi-supervised classification tasks on MNIST, OMNIGLOT, and SVHN. 
	These classification tasks conclude that DirVAE has the best classification performances with its learned representation. 
	Secondly, we test the applicability of DirVAE to the existing models, such as topic models with DirVAE priors on 20Newsgroup and RCV1-v2. 
	This experiment shows that the augmentation of DirVAE to the existing neural variational topic models improves the perplexity and the topic coherence, and most of best performers were DirVAE augmented.

\section{Preliminaries}\label{preliminaires}

\subsection{Variational autoencoders}\label{vae}
	A VAE is composed of two parts: a generative sub-model and an inference sub-model.
	In the generative part, a probabilistic decoder reproduces $\hat{\bold{x}}$ close to an observation $\bold{x}$ from a latent variable $\bold{z} \sim p(\bold{z})$, i.e. $\bold{x} \sim p_{\theta} (\bold{x}|\bold{z}) = p_{\theta} (\bold{x}|\boldsymbol{\zeta})$ where $\boldsymbol{\zeta} = \text{MLP} (\bold{z)}$ is obtained from a latent variable $\bold{z}$ by a multilayer perceptron (MLP).
	In the inference part, a probabilistic encoder outputs a latent variable $\bold{z} \sim q_{\phi} (\bold{z}|\bold{x}) = q_{\phi} (\bold{z}|\boldsymbol{\eta})$ where $\boldsymbol{\eta} = \text{MLP} (\bold{x})$ is computed from the observation $\bold{x}$ by a MLP.
	Model parameters, $\theta$ and $\phi$, are jointly learned by optimizing the below ELBO with the stochastic gradient method through the backpropagations as the ordinary neural networks by using the SGVB estimators on the random nodes. 
\begin{equation}\label{eq_elbo}
	\log p(\bold{x}) \geq \mathcal{L} (\bold{x}) = \mathbb{E}_{q_{\phi (\bold{z}|\bold{x})}} [\log p_{\theta} (\bold{x}|\bold{z})] - \text{KL}(q_{\phi} (\bold{z}|\bold{x}) || p_{\theta} (\bold{z}))
\end{equation}

	In GVAE \citep{Kingma14c}, the prior distribution of $p(\bold{z})$ is assumed to be a standard Gaussian distribution. 
	In SBVAE \citep{Nalisnick17}, the prior distribution becomes a GEM distribution that produces samples with a Beta distribution and a stick-breaking algorithm.

\subsection{Dirichlet distribution as a composition of Gamma random variables}\label{dirichlet_distribution}
	The Dirichlet distribution is a composition of multiple Gamma random variables.
	Note that the probability density functions (PDFs) of Dirichlet and Gamma distributions are as follows:
\begin{equation}
	\text{Dirichlet} (\bold{x};\boldsymbol{\alpha}) = \frac{\Gamma( {\sum} \alpha_k)}{{\prod} \Gamma (\alpha_k)} \prod x_k ^{\alpha_k -1}, ~ \text{Gamma} (x;\alpha,\beta) = \frac{\beta^{\alpha}}{\Gamma (\alpha)} x^{\alpha-1} e^{- \beta x}
\end{equation}
	where $\alpha_k , \alpha, \beta > 0$.
	In detail, if there are $K$ independent random variables following the Gamma distributions $X_k \sim \text{Gamma} (\alpha_k , \beta)$ or $\bold{X} \sim \text{MultiGamma} (\boldsymbol{\alpha}, \beta \cdot \bold{1}_K)$ where $\alpha_k , \beta > 0$ for $k = 1, \cdots, K$, then we have $\bold{Y} \sim \text{Dirichlet} (\boldsymbol{\alpha})$ where $Y_k = X_k / \textstyle{\sum} X_i$.
	It should be noted that the rate parameter, $\beta$, should be the same for every Gamma distribution in the composition. 
	Then, the KL divergence can be derived as the following:
\begin{equation}\label{eq_kl_gamma}
	\text{KL} (Q || P) = \sum \log \Gamma (\alpha_k) - \sum \log \Gamma (\hat{\alpha}_k) + \sum (\hat{\alpha}_k - \alpha_k) \psi (\hat{\alpha}_k)
\end{equation}
	for $P = \text{MultiGamma} (\boldsymbol{\alpha}, \beta \cdot \bold{1}_K)$ and $Q = \text{MultiGamma} (\hat{\boldsymbol{\alpha}}, \beta \cdot \bold{1}_K)$ where $\psi$ is a digamma function.
	The detailed derivation is provided in Appendix \ref{appendix_kl}.

\subsection{SGVB for Gamma random variable and approximation on Dirichlet distribution}\label{sgvb} 
	This section discusses several ways of approximating the Dirichlet random variable; or the SGVB estimators for the Gamma random variables which compose a Dirichlet distribution.
	Utilizing SGVB requires a differentiable non-centered parametrization (DNCP) for the distribution \citep{Kingma14d}.
	The main SGVB for Gamma random variables, used in DirVAE, is using the inverse Gamma CDF approximation explained in the next section.
	Prior works include two approaches: the use of the Weibull distribution and the softmax Gaussian distribution, and the two approaches are explained in this section.

\paragraph{Approximation with Weibull distribution.}\label{weibull}
	Because of the similar PDFs between the Weibull distribution and the Gamma distribution, some prior works used the Weibull distribution as a posterior distribution of the prior Gamma distribution \citep{Zhang18}:
\begin{equation}
	\text{Weibull} (x;k,\lambda) = \frac{k}{\lambda} \Big( \frac{x}{\lambda} \Big) ^{k-1} e^{-(x/\lambda)^k}
	~ \text{where} ~ k,\lambda > 0 ~.
\end{equation}
	The paper \citet{Zhang18} pointed out that there are two useful characteristics when approximating the Gamma distribution with the Weibull distribution.
	One useful property is that the KL divergence expressed in a closed form, and the other is the simple reparametrization trick with a closed form of the inverse CDF from the Weibull distribution.
	However, we noticed that the Weibull distribution has a component of $e^{-(x/\lambda)^k}$, and the Gamma distribution does not have the additional power term of $k$ in the component. 
	Since $k$ is placed in the exponential component, small changes on $k$ can cause a significant difference that limits the optimization. 

\paragraph{Approximation with softmax Gaussian distribution.}\label{softmax_gaussian}
	As in \citet{Mackay98,Srivastava17}, a Dirichlet distribution can be approximated by a softmax Gaussian distribution by using a softmax Laplace approximation.
	The relation between the Dirichlet parameter $\boldsymbol{\alpha}$ and the Gaussian parameters $\boldsymbol{\mu},\boldsymbol{\Sigma}$ is explained as the following:
\begin{equation}\label{eq_softmax_gaussian}
	\mu_k = \log \alpha_k - \frac{1}{K} \sum_i \log \alpha_i , ~
	\Sigma_k = \frac{1}{\alpha_k} \Big( 1 - \frac{2}{K} \Big) + \frac{1}{K^2} \sum_i \frac{1}{\alpha_i} ~,
\end{equation}
	where $\boldsymbol{\Sigma}$ is assumed to be a diagonal matrix, and we use the reparametrization trick in the usual GVAE for the SGVB estimator.

\section{Model description}\label{model_description}
	Along with the inverse Gamma CDF approximation, we describe two sub-models in this section: the generative sub-model and the inference sub-model.
	Figure \ref{fig_model} describes the graphical notations of various VAEs and the neural network view of our model.

\begin{figure}[h]
\centering
	\begin{subfigure}{.255\linewidth}
	\centering
	\includegraphics[width=\linewidth]{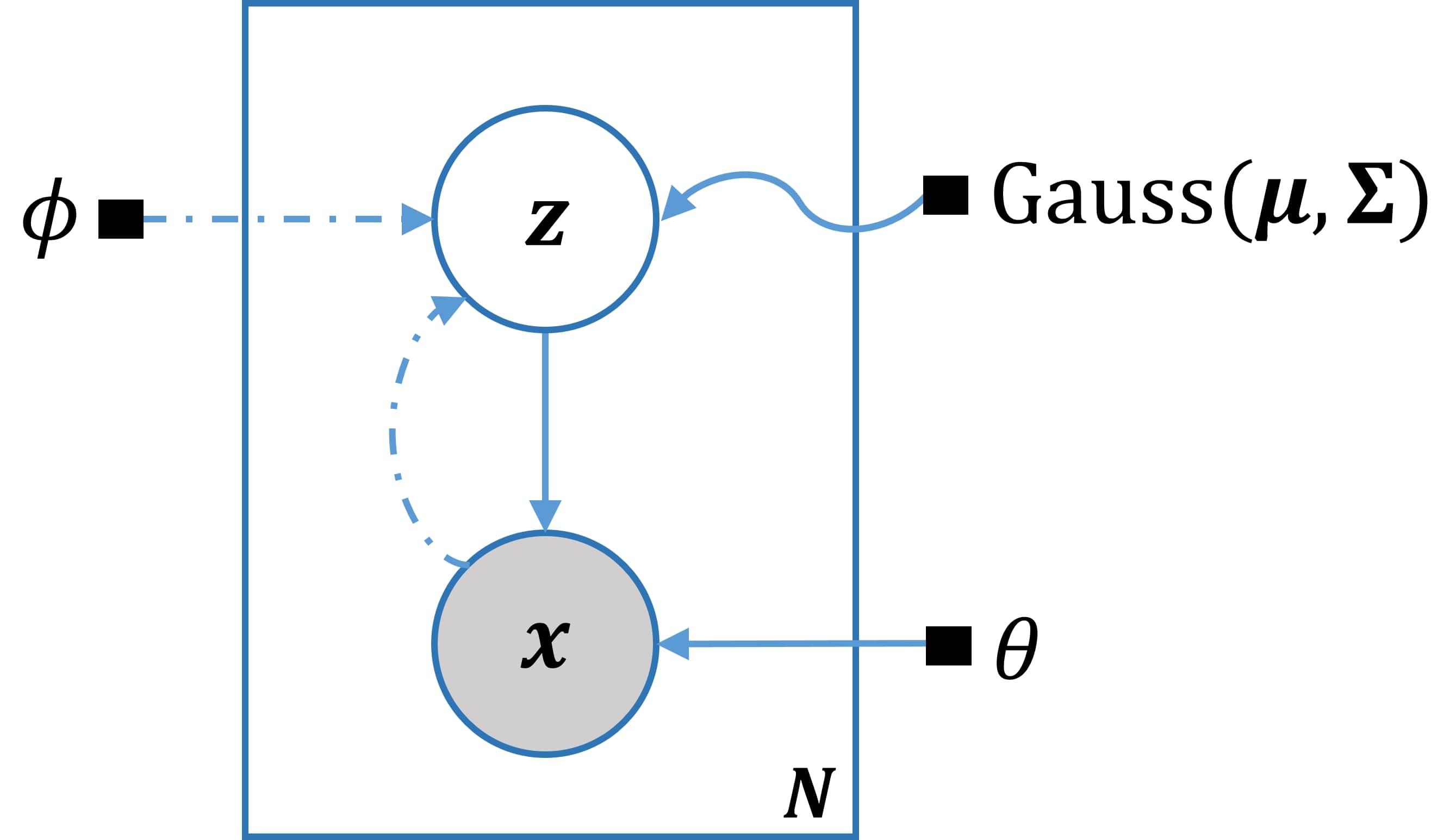}
	\caption{GVAE}
	\label{fig_gvae}
	\end{subfigure}
	\begin{subfigure}{.232\linewidth}
	\centering
	\includegraphics[width=\linewidth]{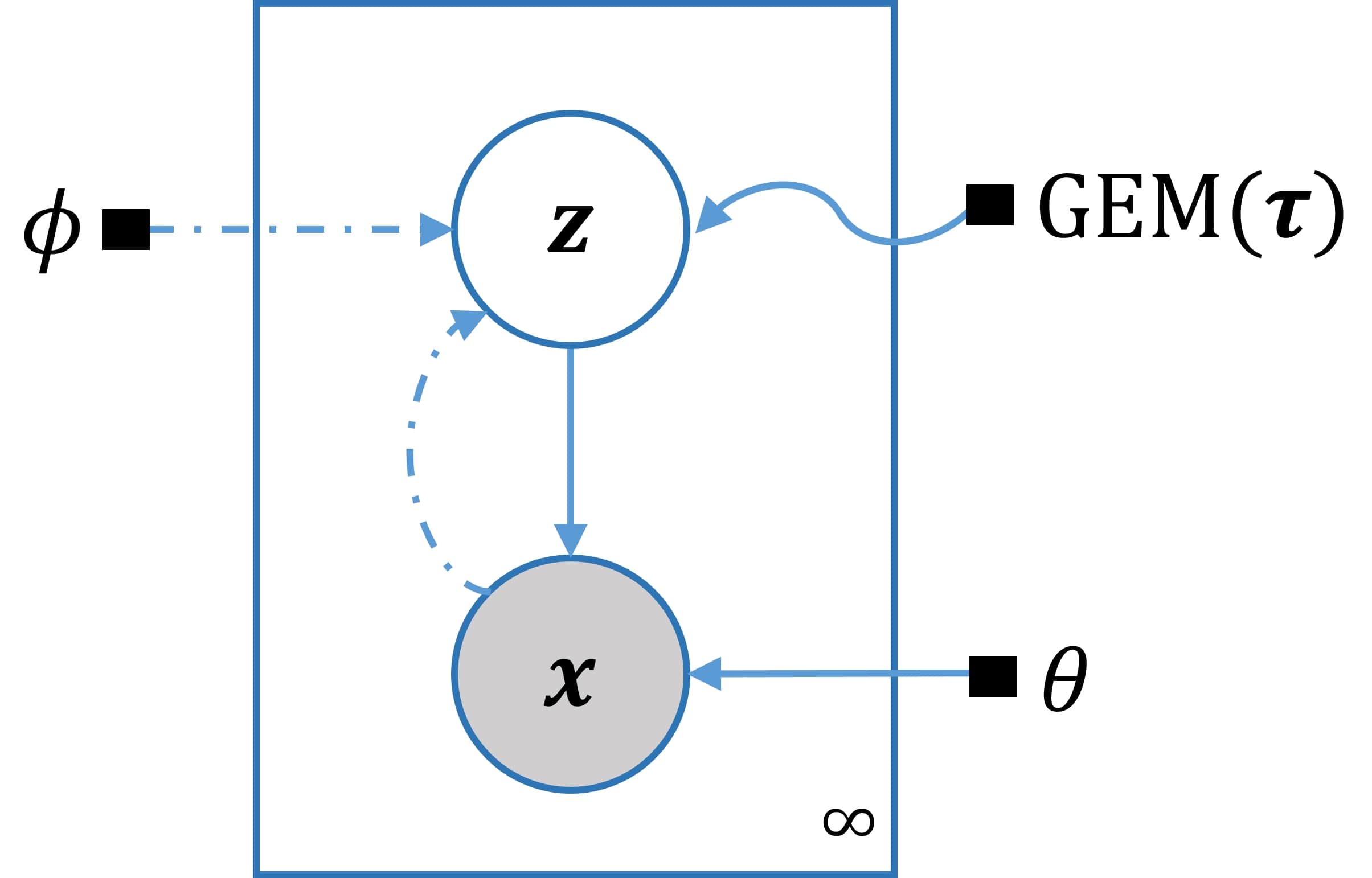}
	\caption{SBVAE}
	\label{fig_sbvae}
	\end{subfigure}
	\begin{subfigure}{.258\linewidth}
	\centering
	\includegraphics[width=\linewidth]{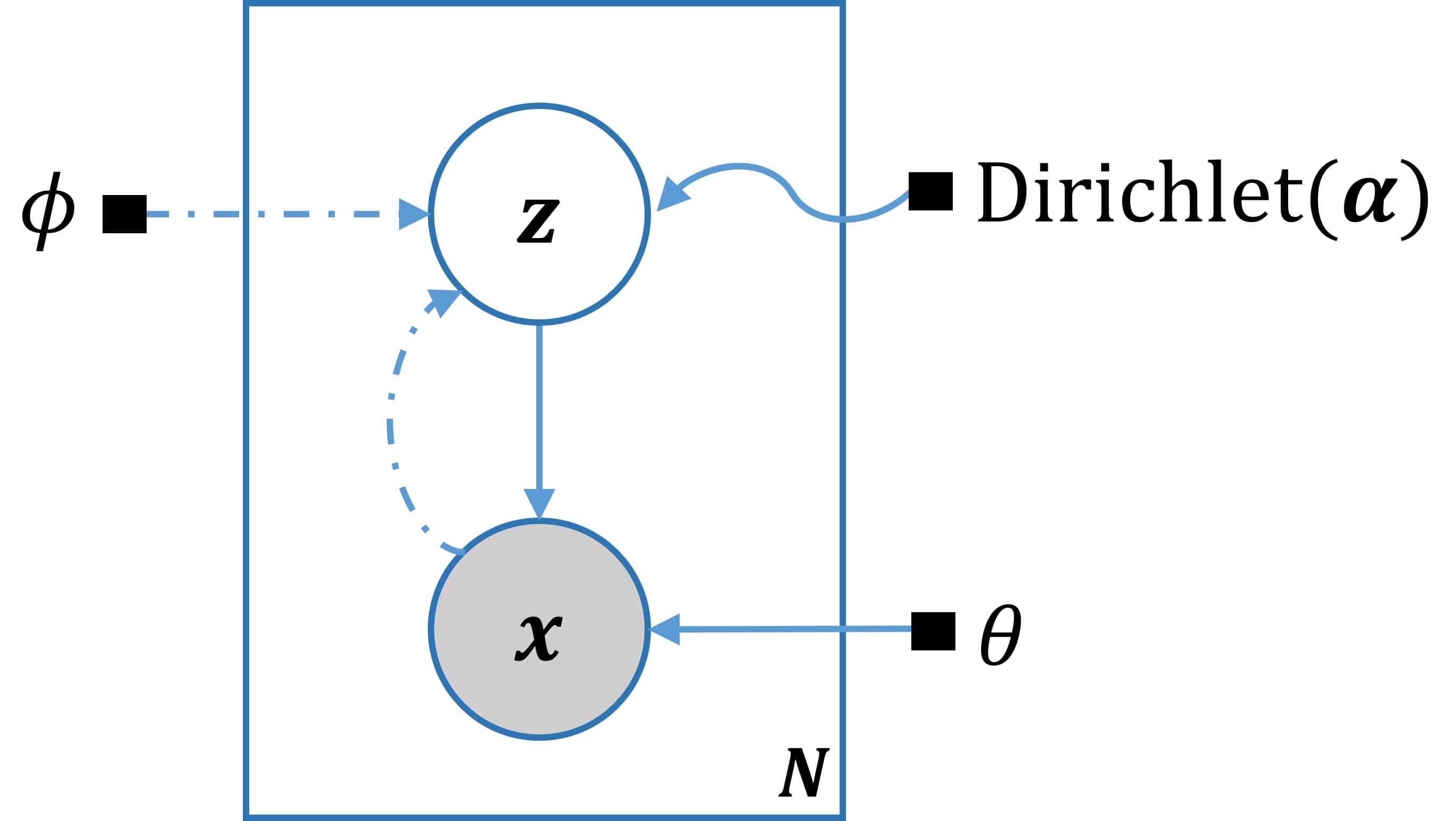}
	\caption{DirVAE}
	\label{fig_dirvae}
	\end{subfigure}
	\begin{subfigure}{.23\linewidth}
	\centering
	\includegraphics[width=\linewidth]{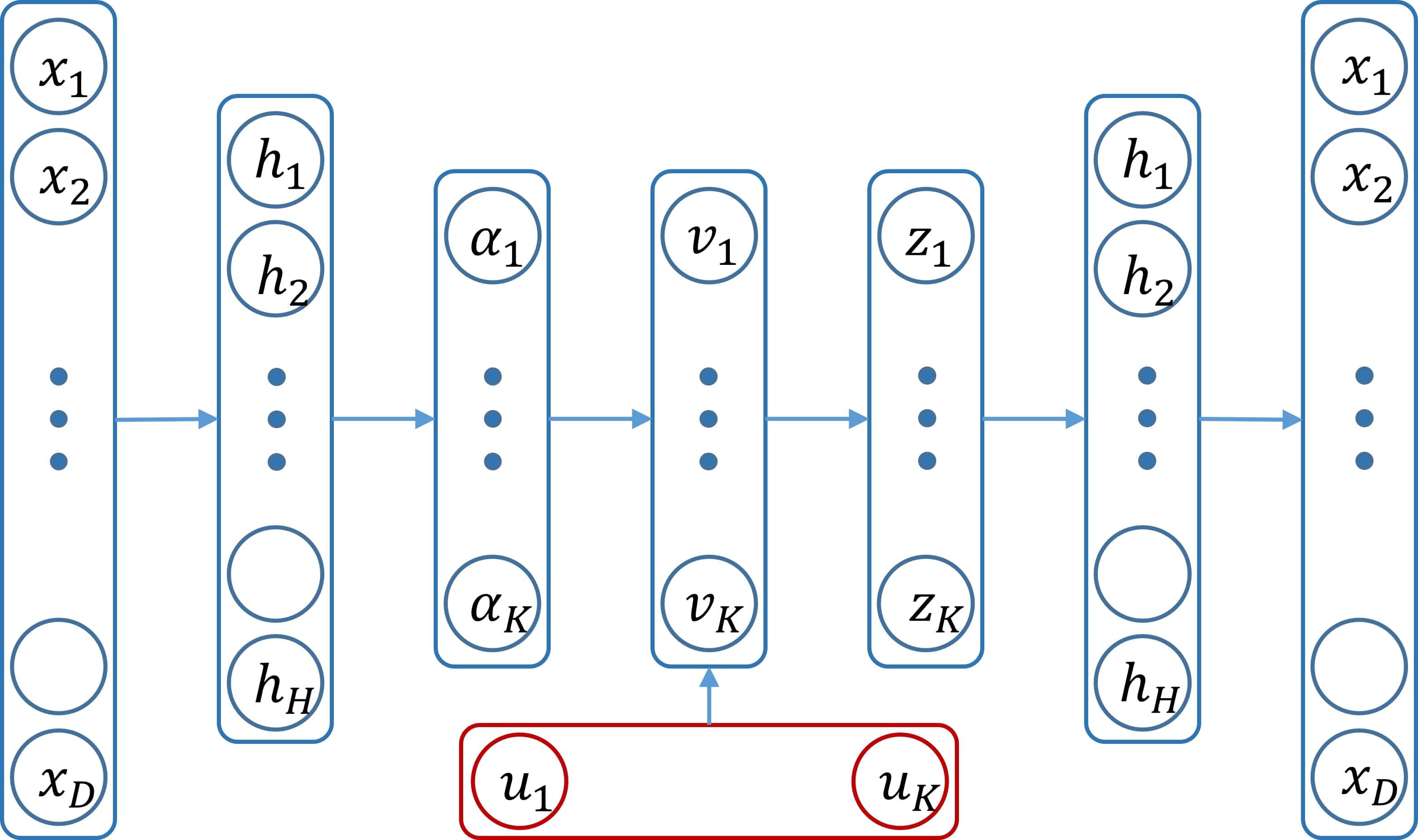}
	\caption{DirVAE in the neural network view}
	\label{fig_dirvae_nn}
	\end{subfigure}
\caption{Sub-figures \ref{fig_gvae}, \ref{fig_sbvae}, and \ref{fig_dirvae} are the graphical notations of the VAEs as latent variable models. The solid lines indicate the generative sub-models where the waved lines denote a prior distribution of the latent variables. The dotted lines indicate the inference sub-models. Sub-figure \ref{fig_dirvae_nn} denotes a neural network structure corresponding to Sub-figure \ref{fig_dirvae}. Red nodes denote the random nodes which allow the backpropagation flows to the input.}
\label{fig_model}
\end{figure}

\paragraph{Generative sub-model.}\label{generative_submodel}
	The key difference between the generative models between the DirVAE and the GVAE is the prior distribution assumption on the latent variable $\bold{z}$.
	Instead of using the standard Gaussian distribution, we use the Dirichlet distribution which is a conjugate prior distribution of the multinomial distribution.
\begin{equation}
	\bold{z} \sim p(\bold{z}) = \text{Dirichlet} (\boldsymbol{\alpha}), ~ \bold{x} \sim p_{\theta} (\bold{x}|\bold{z})
\end{equation}

\paragraph{Inference sub-model.}\label{inference_submodel}
	The probabilistic encoder with an approximating posterior distribution $q_{\phi} (\bold{z}|\bold{x})$ is designed to be $\text{Dirichlet} (\hat{\boldsymbol{\alpha}})$.
	The approximated posterior parameter $\hat{\boldsymbol{\alpha}}$ is derived by the MLP from the observation $\bold{x}$ with the softplus output function, so the outputs can be positive values constrained by the Dirichlet distribution.
	Here, we do not directly sample $\bold{z}$ from the Dirichlet distribution.
	Instead, we use the Gamma composition method described in Section \ref{dirichlet_distribution}.
	Firstly,  we draw $\bold{v} \sim \text{MultiGamma} (\boldsymbol{\alpha},\beta \cdot \bold{1}_K)$. Afterwards, we normalize $\bold{v}$ with its summation $\textstyle{\sum} v_i$.

	The objective function to optimize the model parameters, $\theta$ and $\phi$, is composed of Equation (\ref{eq_elbo}) and (\ref{eq_kl_gamma}). Equation (\ref{objective_function}) is the loss function to optimize after the composition.
	The inverse Gamma CDF method explained in the next paragraph enables the backpropagation flows to the input with the stochastic gradient method.
	Here, for the fair comparison of expressing the Dirichlet distribution between the inverse Gamma CDF approximation method and the softmax Gaussian method, we set $\alpha_k = 1 - 1/K$ when $\mu_k = 0$ and $\Sigma_k = 1$ by using Equation (\ref{eq_softmax_gaussian}); and $\beta=1$.
\begin{equation}\label{objective_function}
\mathcal{L} (\bold{x}) = \mathbb{E}_{q_{\phi (\bold{z}|\bold{x})}} [\log p_{\theta} (\bold{x}|\bold{z})] - (\sum \log \Gamma (\alpha_k) - \sum \log \Gamma (\hat{\alpha}_k) + \sum (\hat{\alpha}_k - \alpha_k) \psi (\hat{\alpha}_k))
\end{equation}


\paragraph{Approximation with inverse Gamma CDF.}\label{inverse_gamma_cdf}
	A previous work \citet{Knowles15} suggested that, if $X \sim \text{Gamma} (\alpha,\beta)$, and if $F(x;\alpha,\beta)$ is a CDF of the random variable $X$, the inverse CDF can be approximated as  $F^{-1} (u;\alpha,\beta) \approx \beta^{-1} (u \alpha \Gamma(\alpha))^{1/\alpha}$.
	Hence, we can introduce an auxiliary variable $u \sim \text{Uniform} (0,1)$ to take over all the randomness of $X$, and we treat the Gamma sampled $X$ as a deterministic value in terms of $\alpha$ and $\beta$.
	
	It should be noted that there has been a practice of utilizing the combination of decomposing a Dirichlet distribution and approximating each Gamma component with inverse Gamma CDF. 
	However, such practices have not been examined with its learning properties and applicabilities. 
	The following section shows a new aspect of \textit{component collapsing} that can be remedied by this combination on Dirichlet prior in VAE, and the section illustrates the performance gains in a certain set of applications, i.e. topic modeling.

\section{Experimental results}\label{experimental_results}
	This section reports the experimental results with the following experiment settings: 1) a pure VAE model; 2) a semi-supervised classification task with VAEs; 3) a supervised classification task with VAEs; and 4) topic models with DirVAE augmentations.

\subsection{Experiments for representation learning of VAEs}\label{vae_representation_learning}

\paragraph{Baseline models.}\label{baseline_models}
	We select the following models as baseline alternatives of the DirVAE: 1) the standard GVAE; 2) the GVAE with softmax (GVAE-Softmax) approximating the Dirichlet distribution with the softmax Gaussian distribution; 3) the SBVAE with the Kumaraswamy distribution (SBVAE-Kuma) $\&$ the Gamma composition (SBVAE-Gamma) described in \citet{Nalisnick17}; and 4) the DirVAE with the Weibull distribution (DirVAE-Weibull) approximating the Gamma distribution with the Weibull distribution described in \citet{Zhang18}.
	We use the following benchmark datasets for the experiments: 1) MNIST; 2) MNIST with rotations (MNIST$+$rot); 3) OMNIGLOT; and 4) SVHN with PCA transformation.
	We provide the details on the datasets in Appendix \ref{appendix_dataset}.

\paragraph{Experimental setting.}\label{experimental_setting}
	As a pure VAE model, we compare the DirVAE with the following models: GVAE, GVAE-Softmax, SBVAE-Kuma, SBVAE-Gamma, and DirVAE-Weibull.
	We use $50$-dimension and $100$-dimension latent variables for MNIST and OMNIGLOT, respectively.
	We provide the details of the network structure and optimization in Appendix \ref{appendix_vae-framework}.
	We set $\boldsymbol{\alpha}=0.98 \cdot \bold{1}_{50}$ for MNIST and $\boldsymbol{\alpha}=0.99 \cdot \bold{1}_{100}$ for OMNIGLOT for the fair comparison to GVAEs by using Equation (\ref{eq_softmax_gaussian}).
	All experiments use the Adam optimizer \citep{Kingma14a} for the parameter learning. 
	Finally, we acknowledge that the hyper-parameter could be updated as Appendix \ref{appendix_hyper-parameter}, and the experiment result with the update is separately reported in Appendix \ref{appendix_vae-framework}.

\paragraph{Quantitative result.}\label{quantitative_result}
	For the quantitative comparison among the VAEs, we calculated the Monte-Carlo estimation on the marginal negative log-likelihood, the negative ELBO, and the reconstruction loss.
	The marginal log-likelihood is approximated as $p(\bold{x}) \approx \textstyle{\sum}_{i} \frac{p(\bold{x}|\bold{z}_i)p(\bold{z}_i)}{q(\bold{z}_i)}$ for single instance $\bold{x}$ where $q(\bold{z})$ is a posterior distribution of a prior distribution $p(\bold{z})$, which is further derived in Appendix \ref{appendix_mc}.
	Table \ref{table_vae} shows the overall performance of the alternative VAEs.
	The DirVAE outperforms all baselines in both datasets from the log-likelihood perspective.
	The value of DirVAE comes from the better encoding of the latent variables that can be used for classification tasks which we examine in the next experiments.
	While the DirVAE-Weibull follows the prior modeling with the Dirichlet distribution, the Weibull based approximation can be improved by adopting the proposed approach with the inverse Gamma CDF.

\begin{table}[t]
\centering
\caption{Negative log-likelihood, negative ELBO, and reconstruction loss of the VAEs for MNIST and OMNIGLOT dataset. The lower values are the better for all measures.}
\label{table_vae}
\resizebox{\linewidth}{!}{
	\begin{tabular}{lcccccc}
	& \multicolumn{3}{c}{MNIST ($K=50$)} & \multicolumn{3}{c}{OMNIGLOT ($K=100$)} \\
	\cmidrule(r){2-4}
	\cmidrule(r){5-7}
	& Neg. LL & Neg. ELBO & Reconst. Loss & Neg. LL & Neg. ELBO & Reconst. Loss \\
	\midrule
	GVAE \citep{Nalisnick17} & $96.80$ & $-$ & $-$ & $-$ & $-$ & $-$ \\
	SBVAE-Kuma \citep{Nalisnick17} & $98.01$ & $-$ & $-$ & $-$ & $-$ & $-$ \\
	SBVAE-Gamma \citep{Nalisnick17} & $100.74$ & $-$ & $-$ & $-$ & $-$ & $-$ \\
	\midrule
	GVAE & $94.54_{\pm0.79}$ & $\bold{98.58_{\pm0.04}}$ & $\bold{74.31_{\pm0.13}}$ & $119.29_{\pm0.44}$ & $126.42_{\pm0.24}$ & $\bold{98.90_{\pm0.36}}$ \\
	GVAE-Softmax & $98.18_{\pm0.61}$ & $103.49_{\pm0.16}$ & $79.36_{\pm0.82}$ & $130.01_{\pm1.16}$ & $139.73_{\pm0.81}$ & $123.34_{\pm1.43}$ \\
	SBVAE-Kuma & $99.27_{\pm0.48}$ & $102.60_{\pm1.81}$ & $83.90_{\pm0.82}$ & $130.73_{\pm2.17}$ & $132.86_{\pm3.03}$ & $119.25_{\pm1.00}$ \\
	SBVAE-Gamma & $102.14_{\pm0.69}$ & $135.30_{\pm0.24}$ & $113.89_{\pm0.25}$ & $128.82_{\pm1.82}$ & $149.30_{\pm0.82}$ & $136.36_{\pm1.53}$ \\
	DirVAE-Weibull & $114.59_{\pm11.15}$ & $183.33_{\pm2.96}$ & $150.92_{\pm3.70}$ & $140.89_{\pm3.21}$ & $198.01_{\pm2.46}$ & $145.52_{\pm3.13}$ \\
	\midrule
	DirVAE & $\bold{87.64_{\pm0.64}}$ & $100.47_{\pm0.35}$ & $81.50_{\pm0.27}$ & $\bold{108.24_{\pm0.42}}$ & $\bold{120.06_{\pm0.35}}$ & $99.78_{\pm0.36}$ \\
	\end{tabular}
}
\end{table}

\paragraph{Qualitative result.}\label{qualitative_result}
	As a qualitative result, we report the latent dimension-wise reconstructions which are decoder outputs with each one-hot vector in the latent dimension.
	Figure \ref{fig_latent_dimwise_generation} shows $50$ reconstructed images corresponding to each latent dimension from GVAE-Softmax, SBVAE, and DirVAE.
	We manually ordered the digit-like figures in the ascending order for GVAE-Softmax and DirVAE.
	We can see that the GVAE-Softmax and the SBVAE have components without significant semantic information, which we will discuss further in Section \ref{collapsing_issues}, and the DirVAE has interpretable latent dimensions in most of the latent dimensions. 
	Figure \ref{fig_dirvae_tsne} also supports the quality of the latent values from DirVAE by visualizing learned latent values through t-SNE \citep{Maaten08}.

\begin{figure}[h]
	\centering
	\begin{subfigure}{.9\linewidth}
	\centering
	\includegraphics[width=\linewidth]{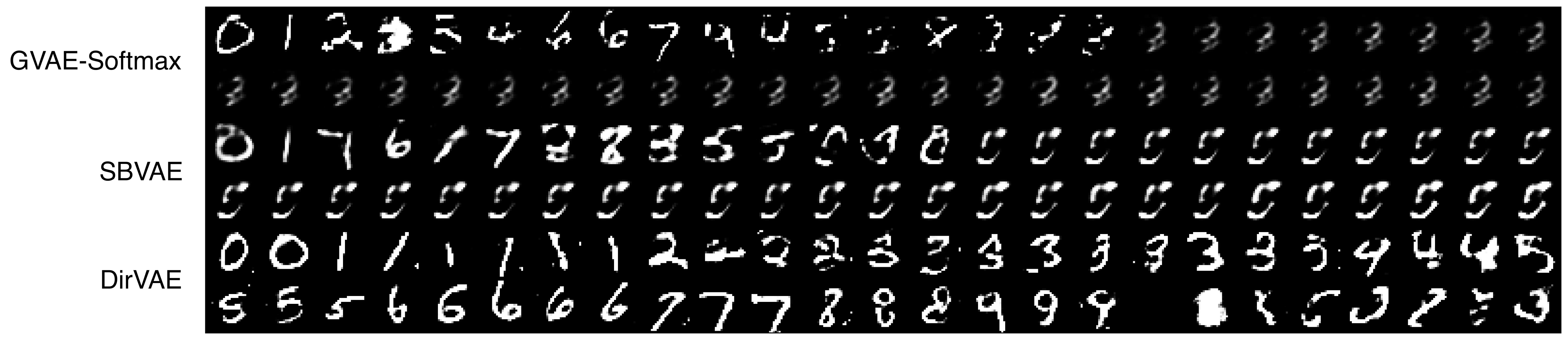}
	\caption{Latent dimension-wise reconstructions of GVAE-Softmax, SBVAE, and DirVAE. The DirVAE shows more meaningful latent dimensions than other VAEs.}
	\label{fig_latent_dimwise_generation}
	\end{subfigure}
	\newline
	\centering
	\begin{subfigure}{.8\linewidth}
	\centering
	\includegraphics[width=.325\linewidth]{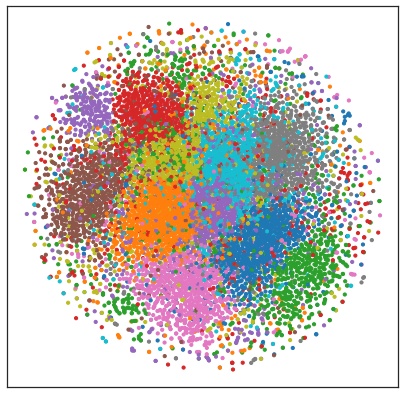}
	\includegraphics[width=.325\linewidth]{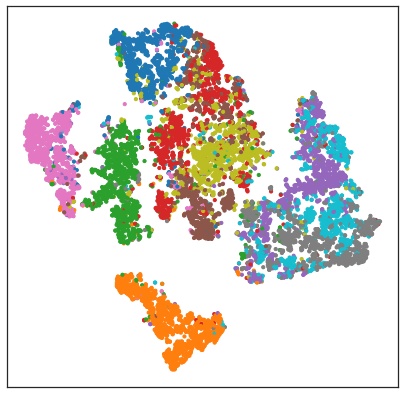}
	\includegraphics[width=.325\linewidth]{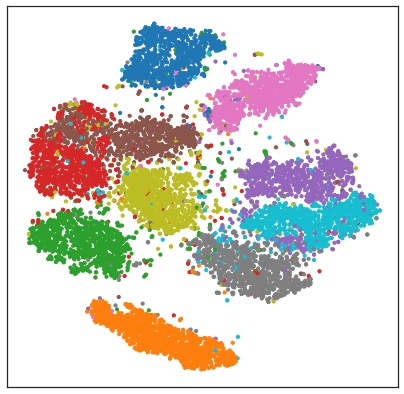}
	\caption{t-SNE latent embeddings of (Left) GVAE, (Middle) SBVAE, (Right) DirVAE.}
	\label{fig_dirvae_tsne}
	\end{subfigure}
	\caption{Latent dimension visualization with reconstruction images and t-SNE latent embeddings.}
	\label{fig_qualitative_mnist}
\end{figure}

\subsection{Discussion on component collapsing}\label{collapsing_issues}
\paragraph{Decoder weight collapsing, a.k.a. component collapsing.}\label{decoder_weight_collapsing}
	One main issue of GVAE is \textit{component collapsing} that there are a significant number of near-zero decoder weights from the latent neurons to the next decoder neurons.
	If these weights become near-zero, the values of the latent dimensions loose influence to the next decoder, and this means an inefficient learning given a neural network structure.
	The same issue occurs when we use the GVAE-Softmax.
	We rename this component collapsing phenomenon as \textit{decoder weight collapsing} to specifically address the collapsing source.
\begin{figure}[h]
\centering
	\begin{subfigure}{.64\linewidth}
	\centering
	\includegraphics[width=\linewidth]{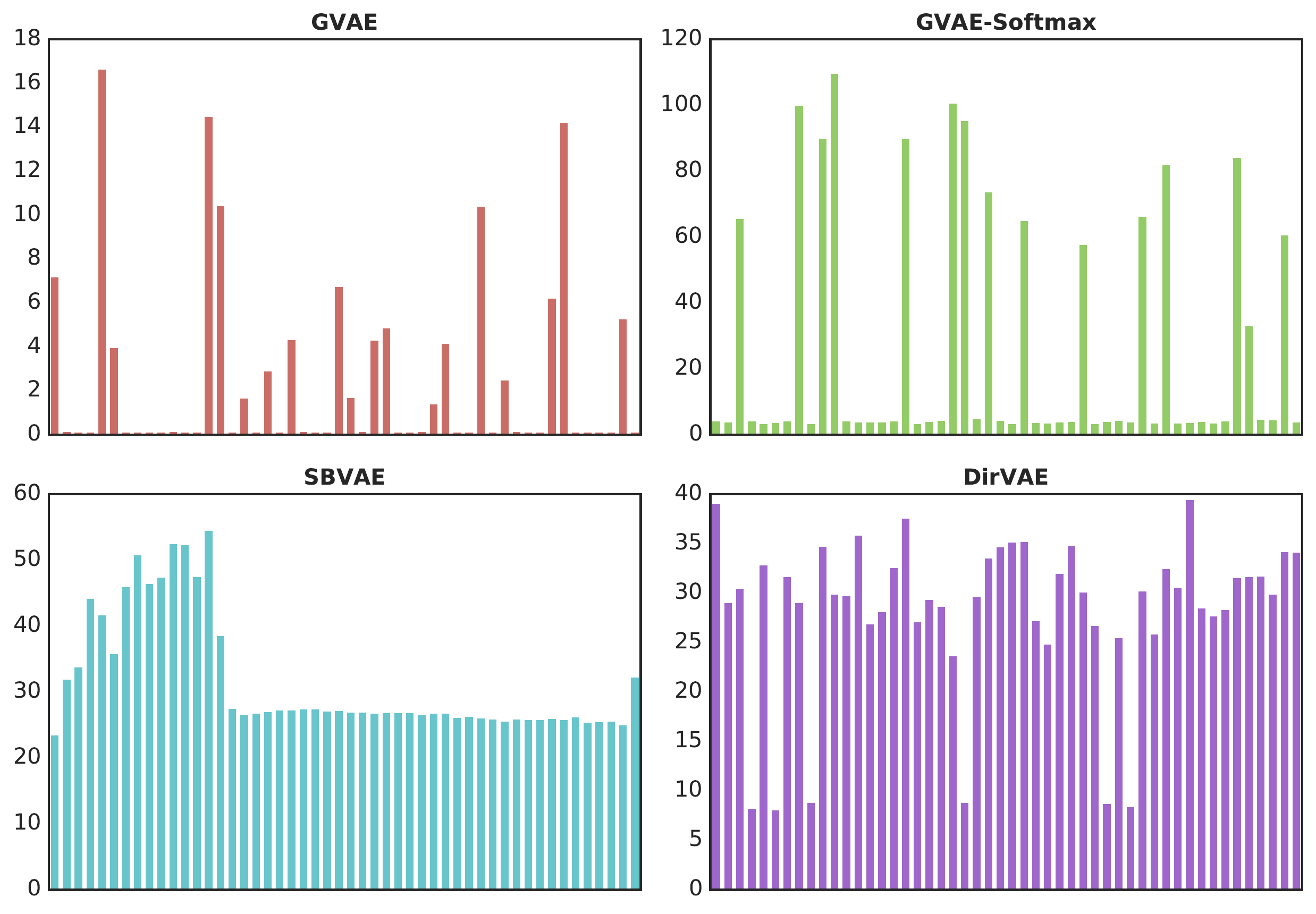}
	\caption{Latent dimension-wise $\text{L}2$-norm of decoder weights of VAEs.}
	\label{fig_cc}
	\end{subfigure}
	\begin{subfigure}{.319\linewidth}
	\centering
	\includegraphics[width=\linewidth]{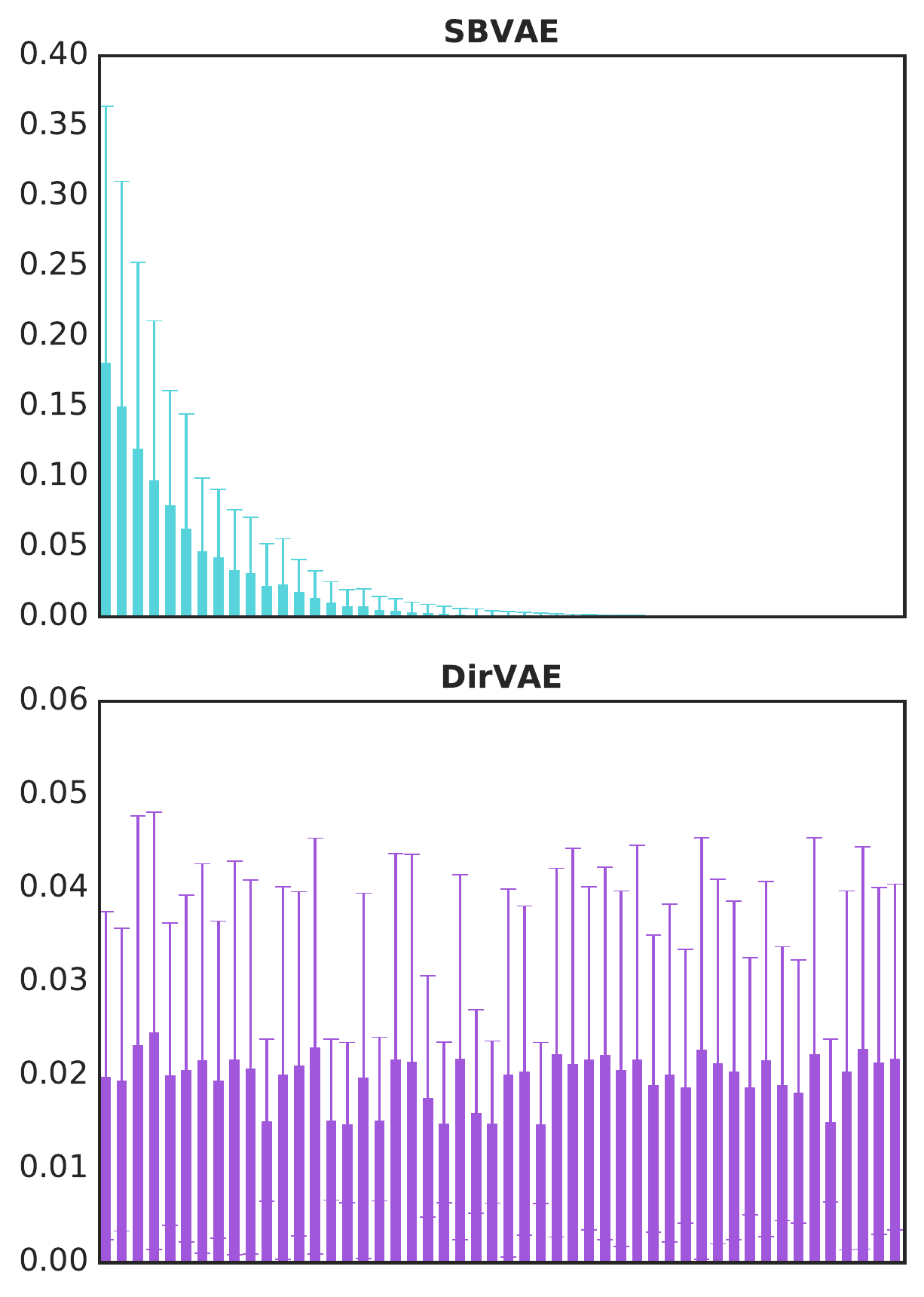}
	\caption{Latent values of VAEs.}
	\label{fig_lvc}
	\end{subfigure}
\caption{Sub-figure \ref{fig_cc} shows GVAE and GVAE-Softmax have component collapsing issue, while SBVAE and DirVAE do not. Sub-figure \ref{fig_lvc} shows that SBVAE has many near-zero output values in the latent dimensions.}
\label{fig_graphs}
\end{figure}

\paragraph{Latent value collapsing.}\label{latent_value_collapsing}
	SBVAE claims that SBVAE solved the \textit{decoder weight collapsing} by learning the meaningful weights as shown in Figure \ref{fig_cc}.
	However, we notice that SBVAE produces the output values, not the weight parameters, from the latent dimension to be near-zero in many latent dimensions after averaging many samples obtained from the test dataset. 
	Figure \ref{fig_lvc} shows the properties of DirVAE and SBVAE from the perspective of the latent value collapsing, which \mbox{SBVAE} shows many near-zero average means and near-zero average variances, while DirVAE does not.  
	The average Fisher kurtosis and average skewness of DirVAE are 5.76 and 2.03, respectively over the dataset, while SBVAE has 20.85 and 4.35, which states that the latent output distribution from SBVAE is more skewed than that of DirVAE.
	We found out that these near-zero latent values prevent learning on decoder weights, which we introduce as another type of collapsing problem, as \textit{latent value collapsing} that is different from the \textit{decoder weight collapsing}. 
	These results mean that SBVAE distributes the non-near-zero latent values sparsely over a few dimensions while DirVAE samples relatively dense latent values. 
	In other words, DirVAE utilizes the full spectrum of latent dimensions compared to SBVAE, and DirVAE has a better learning capability in the decoder network. 
	Figure \ref{fig_latent_dimwise_generation} supports the argument on the latent value collapsing by activating each and single latent dimension with a one-hot vector through the decoder.
	The non-changing latent dimension-wise images of SBVAE proves that there were no generation differences between the two differently activated one-hot latent values.

\subsection{Application 1. experiments of (semi-)supervised classification with VAEs}\label{application1}

\paragraph{Semi-supervised classification task with VAEs.}\label{semi-supervised_classification}
	There is a previous work demonstrating that the SBVAE outperforms the GVAE in semi-supervised classification task \citep{Nalisnick17}.
	The overall model structure for this semi-supervised classification task uses a VAE with separate random variables of $\bold{z}$ and $\bold{y}$, which is introduced as the \textit{M2} model in the original VAE work \citep{Kingma14b}.
	The detailed settings of the semi-supervised classification tasks are enumerated in Appendix \ref{appendix_semi-supervised}.
	Fundamentally, we applied the same experimental settings to GVAE, SBVAE, and DirVAE in this experiment, as specified by the authors in \citet{Nalisnick17}.

	Table \ref{table_semi-supervised} enumerates the performances of the GVAE, the SBVAE, and the DirVAE, and the result shows that the error rate of classification result using $10\%, 5\%$ and $1\%$ of labeled data for each dataset.
	In general, the experiment shows that the DirVAE has the best performance out of three alternative VAEs.
	Also, it should be noted that the performance of the DirVAE is more improved in the most complex task with the SVHN dataset.

\begin{table}[h]
\centering
\caption{The error rate of semi-supervised classification task using VAEs.}

\label{table_semi-supervised}
\resizebox{\linewidth}{!}{
	\begin{tabular}{lccccccccc}
	& \multicolumn{3}{c}{MNIST ($K=50$)} & \multicolumn{3}{c}{MNIST$+$rot ($K=50$)} &  \multicolumn{3}{c}{SVHN ($K=50$)} \\
	\cmidrule(r){2-4}
	\cmidrule(r){5-7}
	\cmidrule(r){8-10}
	& $10\%$ & $5\%$  & $1\%$ & $10\%$ & $5\%$ & $1\%$ & $10\%$ & $5\%$ & $1\%$ \\
	\midrule
	GVAE \citep{Nalisnick17} & $\bold{3.95_{\pm0.15}}$ & $\bold{4.74_{\pm0.43}}$ & $11.55_{\pm2.28}$ & $21.78_{\pm0.73}$ & $27.72_{\pm0.69}$ & $38.13_{\pm0.95}$ & $36.08_{\pm1.49}$ & $48.75_{\pm1.47}$ & $69.58_{\pm1.64}$ \\
	SBVAE \citep{Nalisnick17} & $4.86_{\pm0.14}$ & $5.29_{\pm0.39}$ & $7.34_{\pm0.47}$ & $11.78_{\pm0.39}$ & $14.27_{\pm0.58}$ & $27.67_{\pm1.39}$ & $32.08_{\pm4.00}$ & $37.07_{\pm5.22}$ & $61.37_{\pm3.60}$ \\
	\midrule
	DirVAE & $4.60_{\pm0.07}$ & $5.05_{\pm0.18}$ & $\bold{7.00_{\pm0.17}}$ & $\bold{11.18_{\pm0.32}}$ & $\bold{13.53_{\pm0.46}}$ & $\bold{26.20_{\pm0.66}}$ & $\bold{24.81_{\pm1.13}}$ & $\bold{28.45_{\pm1.14}}$ & $\bold{55.99_{\pm3.30}}$ \\
	\end{tabular}
}
\end{table}

\paragraph{Supervised classification task with latent values of VAEs.}\label{supervised_classification}
	Also, we tested the performance of the supervised classification task with the learned latent representation from the VAEs.
	We applied the vanilla version of VAEs to the datasets, and we classified the latent representation of instances with $k$-Nearest Neighbor ($k$NN) which is one of the simplest classification algorithms.
	Hence, this experiment can better distinguish the performance of the representation learning in the classification task.
	Further experimental details can be found in Appendix \ref{appendix_supervised}. 

	Table \ref{table_supervised} enumerates the performances from the experimented VAEs in the datasets of MNIST and OMNIGLOT.
	Both datasets indicated that the DirVAE shows the best performance in reducing the classification error, which we conjecture that the performance is gathered from the better representation learning.
	It should be noted that, to our knowledge, this is the first reported comparison of latent representation learning on VAEs with $k$NN in the supervised classification using OMNIGLOT dataset.
	We identified that the classification with OMNIGLOT is difficult given that the $k$NN error rates with the raw original data are as high as $69.94\%$, $69.41\%$, and $70.10\%$.
	This high error rate mainly originates from the number of classification categories which is $50$ categories in our test setting of OMNIGLOT, compared to $10$ categories in MNIST.

\begin{table}[h]
\centering
\caption{The error rate of $k$NN with the latent representations of VAEs.}
\label{table_supervised}
\resizebox{0.8\linewidth}{!}{
	\begin{tabular}{lcccccc}	
	& \multicolumn{3}{c}{MNIST ($K=50$)} & \multicolumn{3}{c}{OMNIGLOT ($K=100$)} \\
	\cmidrule(r){2-4}
	\cmidrule(r){5-7}
	& $k=3$ & $k=5$ & $k=10$ & $k=3$ & $k=5$ & $k=10$ \\
	\midrule
	GVAE \citep{Nalisnick16} & $28.40$ & $20.96$ & $15.33$ & $-$ & $-$ & $-$ \\
	SBVAE \citep{Nalisnick16} & $9.34$ & $8.65$ & $8.90$ & $-$ & $-$ & $-$ \\
	DLGMM \citep{Nalisnick16} & $9.14$ & $8.38$ & $8.42$ & $-$ & $-$ & $-$ \\
	\midrule
	GVAE & $27.16_{\pm0.48}$ & $20.20_{\pm0.93}$ & $14.89_{\pm0.40}$ & $92.34_{\pm0.25}$ & $91.21_{\pm0.18}$ & $88.79_{\pm0.35}$ \\
	GVAE-Softmax & $25.68_{\pm2.64}$ & $21.79_{\pm2.17}$ & $18.75_{\pm2.06}$ & $94.76_{\pm0.20}$ & $94.22_{\pm0.37}$ & $92.98_{\pm0.42}$ \\
	SBVAE & $10.01_{\pm0.52}$ & $9.58_{\pm0.47}$ & $9.39_{\pm0.54}$ & $86.90_{\pm0.82}$ & $85.10_{\pm0.89}$ & $82.96_{\pm0.64}$ \\
	\midrule
	DirVAE & $\bold{5.98_{\pm0.06}}$ & $\bold{5.29_{\pm0.06}}$ & $\bold{5.06_{\pm0.06}}$ & $\bold{76.55_{\pm0.23}}$ & $\bold{73.81_{\pm0.29}}$ & $\bold{70.95_{\pm0.29}}$ \\
	\midrule
	Raw Data & $3.00$ & $3.21$ & $3.44$ & $69.94$ & $69.41$ & $70.10$ \\
	\end{tabular}
}
\end{table}

\subsection{Application 2. experiments of topic model augmentation with DirVAE}\label{application2}
	One usefulness of the Dirichlet distribution is being a conjugate prior to the multinomial distribution, so it has been widely used in the field of topic modeling, such as \textit{Latent Dirichlet Allocation} (LDA) \citep{Blei03}.
	Recently, some neural variational topic (or document) models have been suggested, for example, ProdLDA \citep{Srivastava17}, NVDM \citep{Miao16}, and GSM \citep{Miao17}.
	NVDM used the GVAE, and the GSM used the GVAE-Softmax to make the sum-to-one positive topic vectors.
	Meanwhile, ProdLDA assume the prior distribution to be the Dirichlet distribution with the softmax Laplace approximation.
	To verify the usefulness of the DirVAE, we replace the probabilistic encoder part of the DirVAE to each model.
	Two popular performance measures in the topic model fields, which are perplexity and topic coherence via normalized pointwise mutual information (NPMI) \citep{Lau14}, have been used with \textit{20Newsgroups} and \textit{RCV1-v2} datasets.
	Further details of the experiments can be found in Appendix \ref{appendix_topic_model}. 
	Table \ref{table_topic_model} indicates that the augmentation of DirVAE improves the performance in general. 
	Additionally, the best performers from the two measurements are always the experiment cell with DirVAE augmentation except for the perplexity of RCV1-v2, which still remains competent.

\begin{table}[h]
\centering
\caption{Topic modeling performances of perpexity and NPMI with DirVAE augmentations.}
\label{table_topic_model}
\resizebox{\linewidth}{!}{
	\begin{tabular}{clcccccccc}
	& & \multicolumn{4}{c}{20Newsgroups ($K=50$)} & \multicolumn{4}{c}{RCV1-v2 ($K=100$)} \\
	\cmidrule(r){3-6}
	\cmidrule(r){7-10}
	& & ProdLDA & NVDM & GSM & LDA (Gibbs) & ProdLDA & NVDM & GSM & LDA (Gibbs) \\
	\midrule
	& Reported & $1172$ & $837$ & $822$ & - & - & - & - & - \\
	& Reproduced & $1219_{\pm8.87}$ & $810_{\pm2.60}$ & $954_{\pm1.22}$ & $1314_{\pm18.50}$ & $1190_{\pm45.24}$ & $\bold{796_{\pm6.24}}$ & $1386_{\pm21.06}$ & $1126_{\pm12.66}$ \\
	Perplexity & Add SBVAE & $1164_{\pm2.55}$ & $878_{\pm14.21}$ & $980_{\pm13.50}$ & - & $1077_{\pm22.57}$ & $1050_{\pm12.19}$ & $1670_{\pm4.78}$ & - \\
	\cmidrule(r){2-10}
	& Add DirVAE & $1114_{\pm2.30}$ & $\bold{752_{\pm12.17}}$ & $916_{\pm1.64}$ & - & $992_{\pm2.19}$ & $809_{\pm12.60}$ & $1526_{\pm6.11}$ & - \\
	\midrule
	& Reported & $0.240$ & 0.186 & 0.121 & - & - & - & - & - \\
	& Reproduced & $0.273_{\pm0.019}$ & $0.119_{\pm0.003}$ & $0.199_{\pm0.006}$ & $0.225_{\pm0.002}$ & $0.194_{\pm0.005}$ & $0.023_{\pm0.002}$ & $0.267_{\pm0.019}$ & $0.266_{\pm0.006}$ \\
	NPMI & Add SBVAE & $0.247_{\pm0.015}$ & $0.162_{\pm0.007}$ & $0.162_{\pm0.006}$ & - & $0.190_{\pm0.006}$ & $0.116_{\pm0.016}$ & $0.207_{\pm0.004}$ & - \\
	\cmidrule(r){2-10}
	& Add DirVAE & $\bold{0.359_{\pm0.026}}$ & $0.247_{\pm0.010}$ & $0.201_{\pm0.003}$ & - & $0.193_{\pm0.004}$ & $0.131_{\pm0.015}$ & $\bold{0.308_{\pm0.005}}$ & - \\
	\end{tabular}
}
\end{table}

\section{Conclusion}\label{conclusion}
	Recent advances in VAEs have become one of the cornerstones in the field of DGMs.
	The VAEs infer the parameters of explicitly described latent variables, so the VAEs are easily included in the conventional PGMs.
	While this merit has motivated the diverse cases of merging the VAEs to the graphical models, we ask the fundamental quality of utilizing the GVAE where many models have latent values to be categorical probabilities.
	The softmax function cannot reproduce the multi-modal distribution that the Dirichlet distribution can.
	Recognizing this problem, there have been some previous works that approximated the Dirichlet distribution in the VAE settings by utilizing the Weibull distribution or the softmax Gaussian distribution, but the DirVAE with the inverse Gamma CDF shows the better learning performance in our experiments of the representation: the semi-supervised, the supervised classifications, and the topic models.
	Moreover, DirVAE shows no component collapsing and it leads to better latent representation and performance gain.
	The proposed DirVAE can be widely used if we recall the popularity of the conjugate relation between the multinomial and the Dirichlet distributions because the proposed DirVAE can be a brick to the construction of complex probabilistic models with neural networks.

\bibliographystyle{iclr2019_conference}
\bibliography{iclr2019_conference}

\clearpage


\appendix
\section*{Appendix}
	This is an appendix for \textit{Dirichlet Variational Autoencoder}.
	Here, we describe the derivations of key equations and experimental setting details which were used in the body of the paper.
	The detailed information such as model names, parameter names, or experiment assumptions is based on the main paper.

\section{Monte-Carlo estimation on the marginal likelihood}\label{appendix_mc}

	\begin{prop}\label{likelihood-mc-approximation}
	The marginal log-likelihood is approximated as $p(\bold{x}) \approx \textstyle{\sum}_{i} \frac{p(\bold{x}|\bold{z}_i)p(\bold{z}_i)}{q(\bold{z}_i)}$, where $q(\bold{z})$ is a posterior distribution of a prior distribution $p(\bold{z})$.
	\end{prop}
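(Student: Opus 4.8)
The plan is to recognize the stated expression as the standard importance-sampling Monte Carlo estimator of the marginal likelihood, and to obtain it in three elementary moves: marginalization of the latent variable, a change of measure to the proposal distribution $q$, and replacement of the resulting expectation by an empirical average over samples.

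First I would write the marginal likelihood by integrating the joint density of the generative sub-model (Section \ref{generative_submodel}) over the latent variable, using the factorization $p(\bold{x},\bold{z}) = p(\bold{x}|\bold{z})\, p(\bold{z})$, so that $p(\bold{x}) = \int p(\bold{x}|\bold{z})\, p(\bold{z})\, d\bold{z}$. Next I would insert the posterior $q(\bold{z})$ as a proposal by multiplying and dividing the integrand by it, turning the integral into an expectation under $q$: $p(\bold{x}) = \int \frac{p(\bold{x}|\bold{z})\, p(\bold{z})}{q(\bold{z})}\, q(\bold{z})\, d\bold{z} = \mathbb{E}_{q(\bold{z})}\!\left[ \frac{p(\bold{x}|\bold{z})\, p(\bold{z})}{q(\bold{z})} \right]$. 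Finally I would draw i.i.d.\ samples $\bold{z}_i \sim q(\bold{z})$ and replace the expectation by the empirical mean $\frac{1}{N}\sum_i \frac{p(\bold{x}|\bold{z}_i)\, p(\bold{z}_i)}{q(\bold{z}_i)}$, which is exactly the claimed estimator up to the averaging constant $1/N$ that the statement suppresses.

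The only genuine condition worth flagging is absolute continuity: the change of measure is legitimate only when $q(\bold{z}) > 0$ wherever $p(\bold{x}|\bold{z})\, p(\bold{z}) > 0$, so that the importance weights are finite and well-defined $q$-almost everywhere. For the DirVAE this is immediate, since both $p(\bold{z})$ and $q(\bold{z})$ are Dirichlet densities with full support on the interior of the probability simplex. Under this condition the estimator is unbiased for every $N$, and by the strong law of large numbers it converges almost surely to $p(\bold{x})$ as $N \to \infty$, which is what justifies the symbol ``$\approx$''. I would also note that choosing $q$ to be the (approximate) posterior is not required for correctness but is precisely what keeps the importance weights low-variance; any proposal satisfying the support condition yields a valid estimator. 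Since every step is a routine identity or a direct appeal to the law of large numbers, I do not expect a substantive obstacle — the only care needed is to state the support assumption and to make the implicit $1/N$ normalization explicit.
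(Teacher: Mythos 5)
Your proposal is correct and follows essentially the same route as the paper's own proof: marginalize over $\bold{z}$, multiply and divide by $q(\bold{z})$ to rewrite the integral as an expectation under $q$, and replace that expectation by a Monte Carlo sum over samples $\bold{z}_i \sim q(\bold{z})$. Your additional remarks on the support condition and the suppressed $1/N$ normalization are sensible refinements the paper omits, but they do not change the argument.
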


	\begin{proof}
	\begin{align*}
	p(\bold{x}) = & \int_{\bold{z}} p(\bold{x},\bold{z}) d\bold{z} = \int_{\bold{z}} p(\bold{x},\bold{z}) \frac{q(\bold{z})}{q(\bold{z})} d\bold{z} \\
	= & \int_{\bold{z}} p(\bold{x}|\bold{z}) p(\bold{z}) \frac{q(\bold{z})}{q(\bold{z})} d\bold{z} = \int_{\bold{z}} \frac{p(\bold{x}|\bold{z}) p(\bold{z})}{q(\bold{z})} q(\bold{z}) d\bold{z} \\
	\approx & \sum_{i} \frac{p(\bold{x}|\bold{z}_i) p(\bold{z}_i)}{q(\bold{z}_i)} ~ \text{where} ~ \bold{z}_i \sim q(\bold{z})
	\end{align*}
	\end{proof}

\section{KL divergence of two multi-Gamma distributions}\label{appendix_kl}
	\begin{prop}\label{kl_gamma_to_gamma}
	Define $\bold{X} = (X_1, \cdots, X_K) \sim \text{MultiGamma} (\boldsymbol{\alpha}, \beta \cdot \bold{1}_K)$ as a vector of $K$ independent Gamma random variables $X_k \sim \text{Gamma} (\alpha_k , \beta)$ where $\alpha_k , \beta > 0$ for $k = 1, \cdots, K$.
	The KL divergence between two MultiGamma distributions $P = \text{MultiGamma} (\boldsymbol{\alpha}, \beta \cdot \bold{1}_K)$ and $Q = \text{MultiGamma} (\hat{\boldsymbol{\alpha}}, \beta \cdot \bold{1}_K)$ can be derived as the following:
\begin{equation}\label{appendix_eq_kl_gamma}
	\text{KL} (Q || P) = \sum \log \Gamma (\alpha_k) - \sum \log \Gamma (\hat{\alpha}_k) + \sum (\hat{\alpha}_k - \alpha_k) \psi (\hat{\alpha}_k) ~,
\end{equation}
	where $\psi$ is a digamma function.
	\end{prop}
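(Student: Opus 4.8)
The plan is to exploit the product structure of the two distributions and reduce the multivariate KL divergence to a sum of univariate Gamma KL divergences, then evaluate each summand directly from the density ratio. Since both $P$ and $Q$ factorize over the $K$ independent coordinates (by the definition of MultiGamma in \secref{dirichlet_distribution}), I would first invoke the standard fact that the KL divergence between product measures equals the sum of the coordinatewise KL divergences, yielding $\text{KL}(Q \,\|\, P) = \sum_k \text{KL}(\text{Gamma}(\hat\alpha_k,\beta) \,\|\, \text{Gamma}(\alpha_k,\beta))$. This isolates the work to the one-dimensional case in which both marginals share the common rate $\beta$.

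Next I would write out the log density ratio for a single coordinate using the Gamma PDF recorded in the preliminaries. The factors $e^{-\beta x}$ cancel exactly because the two distributions share the rate $\beta$ (this is precisely where the shared-rate hypothesis enters), leaving
\[
\log\frac{q_k(x)}{p_k(x)} = (\hat\alpha_k-\alpha_k)\log\beta + \log\Gamma(\alpha_k) - \log\Gamma(\hat\alpha_k) + (\hat\alpha_k-\alpha_k)\log x .
\]
Taking the expectation under $q_k = \text{Gamma}(\hat\alpha_k,\beta)$, every term is a constant except the last, so the only nontrivial quantity to determine is $\E_{q_k}[\log X]$.

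The main obstacle, and the sole genuinely nonroutine step, is evaluating $\E[\log X]$ for a Gamma variate. I would obtain it by differentiating the normalization identity $\int_0^\infty x^{\alpha-1}e^{-\beta x}\,dx = \Gamma(\alpha)\beta^{-\alpha}$ with respect to the shape $\alpha$; differentiating under the integral sign on the left produces the extra factor $\log x$, while differentiating the right-hand side and dividing by the normalizer $\Gamma(\alpha)\beta^{-\alpha}$ gives $\E[\log X] = \psi(\hat\alpha_k) - \log\beta$, where $\psi = \Gamma'/\Gamma$ is the digamma function. Substituting this back, the contribution $(\hat\alpha_k-\alpha_k)(\psi(\hat\alpha_k)-\log\beta)$ cancels the $(\hat\alpha_k-\alpha_k)\log\beta$ term coming from the ratio of rate normalizers, so all $\log\beta$ dependence vanishes.

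Finally I would collect the surviving terms $\log\Gamma(\alpha_k) - \log\Gamma(\hat\alpha_k) + (\hat\alpha_k-\alpha_k)\psi(\hat\alpha_k)$ and sum over $k$ to recover exactly the stated expression \eqref{appendix_eq_kl_gamma}. I expect no further difficulties: once the $\E[\log X]$ identity is in hand, the argument is a bookkeeping cancellation, and the disappearance of the rate terms is what makes the common-$\beta$ assumption indispensable.
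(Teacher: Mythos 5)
Your proposal is correct and follows essentially the same route as the paper's proof: both expand the log density ratio (with the $e^{-\beta x}$ factors cancelling due to the shared rate), and both evaluate the only nontrivial expectation $\E_{q}[\log X]$ by differentiating the Gamma normalization identity with respect to the shape parameter, yielding $\psi(\hat{\alpha}_k)-\log\beta$ and the same cancellation of the $\log\beta$ terms. The only cosmetic difference is that you first reduce to a sum of univariate KL divergences, whereas the paper carries the product-form multivariate integral throughout; the computation is identical.
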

	
	\begin{proof}
	Note that the derivative of a Gamma-like function $\frac{\Gamma(\alpha)}{\beta^\alpha}$ can be derived as follows:
	\begin{equation*}
	\frac{d}{d\alpha} \frac{\Gamma (\alpha)}{\beta^{\alpha}} = \beta^{-\alpha} (\Gamma ^\prime (\alpha) - \Gamma (\alpha) \log \beta) = \int_0 ^{\infty} x^{\alpha -1} e^{- \beta x} \log x ~ dx ~ .
	\end{equation*}	
	Then, we have the following.	
	\begin{align*}
	~ & \text{KL}(Q || P) = 
	 \int_{\mathcal{D}} q(\bold{x}) \log \frac{q(\bold{x})}{p(\bold{x})} ~ d\bold{x} \\
	= & \int_0 ^{\infty} \cdots \int_0 ^{\infty} \prod \text{Gamma} (\hat{\alpha}_k ,\beta) \log \frac {\beta ^{\sum \hat{\alpha}_k \prod \Gamma^{-1} (\hat{\alpha}_k) e^{-\beta \sum x_k} \prod x_k ^{\hat{\alpha}_k -1}}}{\beta ^{\sum {\alpha}_k \prod \Gamma^{-1} ({\alpha}_k) e^{-\beta \sum x_k} \prod x_k ^{{\alpha}_k -1}}} ~ d\bold{x} \\
	= & \int_0 ^{\infty} \cdots \int_0 ^{\infty} \prod \text{Gamma} (\hat{\alpha}_k ,\beta) \\
	& \times \Big[ \sum ( \hat{\alpha}_k - \alpha_k ) \log \beta + \sum \log \Gamma (\alpha_k) - \sum \log \Gamma (\hat{\alpha}_k) + \sum (\hat{\alpha}_k - \alpha_k ) \log x_k \Big] ~ d\bold{x} \\
	= & ~ \Big[ \sum ( \hat{\alpha}_k - \alpha_k ) \log \beta + \sum \log \Gamma (\alpha_k) - \sum \log \Gamma (\hat{\alpha}_k) \Big] \\
	& + \int_0 ^{\infty} \cdots \int_0 ^{\infty} \frac{\beta ^{\hat{\alpha}_k}}{\prod \Gamma (\hat{\alpha}_k)} e^{-\beta \sum x_k} \prod x_k ^{\hat{\alpha}_k -1} \big( \sum (\hat{\alpha}_k - \alpha_k ) \log x_k \big) ~ d\bold{x}  \displaybreak \\
	= & ~ \Big[ \sum ( \hat{\alpha}_k - \alpha_k ) \log \beta + \sum \log \Gamma (\alpha_k) - \sum \log \Gamma (\hat{\alpha}_k) \Big] \\
	& + \sum (\hat{\alpha}_k - \alpha_k ) \beta^{\hat{\alpha}_k} \Gamma^{-1} (\hat{\alpha}_k) \beta^{-\hat{\alpha}_k} \big( \Gamma^\prime (\hat{\alpha}_k) - \Gamma (\hat{\alpha}_k) \log \beta \big) \\
	= & ~ \sum ( \hat{\alpha}_k - \alpha_k ) \log \beta + \sum \log \Gamma (\alpha_k) - \sum \log \Gamma (\hat{\alpha}_k) + \sum (\hat{\alpha}_k - \alpha_k)(\psi (\hat{\alpha}_k) - \log \beta) \\
	= & ~ \sum \log \Gamma (\alpha_k) - \sum \log \Gamma (\hat{\alpha}_k) + \sum (\hat{\alpha}_k - \alpha_k) \psi (\hat{\alpha}_k) 
	\end{align*}
	\end{proof}

\section{Hyper-parameter $\bold{\alpha}$ learning strategy}\label{appendix_hyper-parameter}
	In this section, we introduce the method of moment estimator (MME) to update the Dirichlet prior parameter $\boldsymbol{\alpha}$.
	Suppose we have a set of sum-to-one proportions $\mathcal{D} = \{ \bold{p}_1, \cdots, \bold{p}_N \}$ sampled from $\text{Dirichlet}(\boldsymbol{\alpha})$, then the MME update rule is as the following:
\begin{equation}
	\alpha_k \leftarrow {\frac{S}{N} \sum_n p_{n,k}} ~ \text{where} ~ S = \frac{1}{K} {\sum_k \frac{\tilde{\mu}_{1,k} - \tilde{\mu}_{2,k}}{\tilde{\mu}_{2,k} - \tilde{\mu}_{1,k} ^2} ~ \text{for} ~ \tilde{\mu}_{j,k}} = {\frac{1}{N} \sum_n p_{n,k} ^j ~.}
\end{equation}

	After the burn-in period for stabilizing the neural network parameters, we use the MME for the hyper-parameter learning using the sampled latent values during training.
	We alternatively update the neural network parameters and hyper-parameter $\boldsymbol{\alpha}$. 
	We choose this estimator because of its closed form nature and consistency \citep{Minka00}. 
	The usefulness of the hyper-parameter update can be found in Appendix \ref{appendix_vae-framework}.

	\begin{prop}\label{method_of_moments_dirichlet}
	Given a proportion set $\mathcal{D} = \{ \bold{p}_1, \cdots, \bold{p}_N \}$ sampled from $\text{Dirichlet}(\boldsymbol{\alpha})$, MME of the hyper-parameter $\boldsymbol{\alpha}$ is as the following: \\
	\begin{equation*}
	\alpha_k \leftarrow \frac{S}{N} \sum_n p_{n,k} ~ \text{where} ~ S = \frac{1}{K} \sum_k \frac{\tilde{\mu}_{1,k} - \tilde{\mu}_{2,k}}{\tilde{\mu}_{2,k} - \tilde{\mu}_{1,k} ^2} ~ \text{for} ~ \tilde{\mu}_{j,k} = \frac{1}{N} \sum_n p_{n,k} ^j ~.
	\end{equation*}
	\end{prop}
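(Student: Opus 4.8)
The plan is to apply the classical method of moments: express the first two population moments of a $\text{Dirichlet}(\boldsymbol{\alpha})$ variate in terms of $\boldsymbol{\alpha}$, equate them to the empirical moments $\tilde{\mu}_{1,k}$ and $\tilde{\mu}_{2,k}$, and invert the resulting relations to recover $\boldsymbol{\alpha}$. First I would record the moment formulas. Writing $\alpha_0 = \sum_k \alpha_k$ for the total concentration, a Dirichlet variate $\boldsymbol{p}$ satisfies $\E[p_k] = \alpha_k / \alpha_0$ and $\E[p_k^2] = \alpha_k(\alpha_k + 1) / [\alpha_0(\alpha_0 + 1)]$. These follow from the standard aggregation property that each marginal $p_k$ is $\text{Beta}(\alpha_k, \alpha_0 - \alpha_k)$, whose mean and second moment are known in closed form.

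Next, the method of moments replaces the population moments by their empirical counterparts $\tilde{\mu}_{1,k} = \frac{1}{N}\sum_n p_{n,k}$ and $\tilde{\mu}_{2,k} = \frac{1}{N}\sum_n p_{n,k}^2$. The first-moment equation $\tilde{\mu}_{1,k} = \alpha_k / \alpha_0$ immediately gives $\alpha_k = \alpha_0 \tilde{\mu}_{1,k}$, so once $\alpha_0$ is identified with $S$, substituting the definition of $\tilde{\mu}_{1,k}$ reproduces the target update $\alpha_k \leftarrow (S/N)\sum_n p_{n,k}$. It therefore remains to determine $\alpha_0$ from the second moment.

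To do this I would substitute $\alpha_k = \alpha_0 \tilde{\mu}_{1,k}$ into $\tilde{\mu}_{2,k} = \alpha_k(\alpha_k + 1)/[\alpha_0(\alpha_0 + 1)]$, which simplifies to $\tilde{\mu}_{2,k}(\alpha_0 + 1) = \tilde{\mu}_{1,k}(\alpha_0 \tilde{\mu}_{1,k} + 1)$. This is linear in $\alpha_0$, and solving it yields the per-coordinate estimate $\alpha_0 = (\tilde{\mu}_{1,k} - \tilde{\mu}_{2,k})/(\tilde{\mu}_{2,k} - \tilde{\mu}_{1,k}^2)$. Since each coordinate $k$ furnishes such an estimate and these need not coincide exactly on a finite sample, the estimator aggregates them by averaging over the $K$ coordinates, giving $S = \frac{1}{K}\sum_k (\tilde{\mu}_{1,k} - \tilde{\mu}_{2,k})/(\tilde{\mu}_{2,k} - \tilde{\mu}_{1,k}^2)$, and the claim follows.

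I expect the main obstacle to be bookkeeping rather than conceptual depth: one must derive the second moment correctly, since the $+1$ shifts in $\alpha_k(\alpha_k + 1)$ and $\alpha_0(\alpha_0 + 1)$ are where off-by-one or sign errors tend to enter, and one must justify the averaging step. The latter is the only genuinely non-mechanical point — under the Dirichlet model every coordinate shares the same $\alpha_0$, so averaging the $K$ (asymptotically consistent) estimates is a principled symmetric aggregation rather than an arbitrary choice, and it is what makes $S$ a single well-defined scalar.
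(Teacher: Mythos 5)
Your proposal is correct and follows essentially the same route as the paper's proof: both use the first two Dirichlet marginal moments $\E[p_k]=\alpha_k/\alpha_0$ and $\E[p_k^2]=\alpha_k(\alpha_k+1)/[\alpha_0(\alpha_0+1)]$, derive the per-coordinate identity $\alpha_0=(\mu_{1,k}-\mu_{2,k})/(\mu_{2,k}-\mu_{1,k}^2)$ (the paper by directly simplifying the ratio's numerator and denominator, you by solving the equivalent linear equation in $\alpha_0$), and then average over $k$ to obtain $S$ before setting $\alpha_k=S\,\tilde{\mu}_{1,k}$. No gaps to report.
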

	\begin{proof}
	Define $\mu_{j,k} = \mathbb{E}[p_k ^j]$ as the $j^{\text{th}}$ moment of the $k^{\text{th}}$ dimension of Dirichlet distribution with prior $\boldsymbol{\alpha}$.
	Then, by the law of large number, $\mu_{j,k} \approx \tilde{\mu}_{j,k}$.
	It can be easily shown that $\mu_{1,k} = \frac{\alpha_k}{\sum_i \alpha_i}$ and $\mu_{2,k} = \frac{\alpha_k}{\sum_i \alpha_i} \frac{1 + \alpha_k}{1 + \sum_i \alpha_i} = \mu_{1,k} \frac{1 + \alpha_k}{1 + \sum_i \alpha_i}$ so that
	\begin{align*}
	\text{numerator} \Big( \frac{\mu_{1,k} - \mu_{2,k}}{\mu_{2,k} - \mu_{1,k} ^2} \Big) = & ~ \frac{\alpha_k}{\sum_i \alpha_i} - \frac{\alpha_k}{\sum_i \alpha_i} \frac{1 + \alpha_k}{1 + \sum_i \alpha_i}   \\
	= & ~ \frac{\alpha_k (\sum_{i \neq k} \alpha_i)}{(\sum_i \alpha_i)(1 + \sum_i \alpha_i)}
	\end{align*}
	\begin{align*}
	\text{denominator} \Big( \frac{\mu_{1,k} - \mu_{2,k}}{\mu_{2,k} - \mu_{1,k} ^2} \Big) = & ~ \frac{\alpha_k}{\sum_i \alpha_i} \frac{1 + \alpha_k}{1 + \sum_i \alpha_i} - \Big( \frac{\alpha_k}{\sum_i \alpha_i} \Big)^2  \\
	= & ~ \frac{\alpha_k (\sum_{i \neq k} \alpha_i)}{(\sum_i \alpha_i)^2 (1 + \sum_i \alpha_i)}
	\end{align*}
	holds for each $k = 1, \cdots, K$.
	Therefore, 
	\begin{equation*}
\sum_i \alpha_i = \frac{\mu_{1,k} - \mu_{2,k}}{\mu_{2,k} - \mu_{1,k} ^2} \approx \frac{1}{K} \sum_k \frac{\mu_{1,k} - \mu_{2,k}}{\mu_{2,k} - \mu_{1,k} ^2} \approx \frac{1}{K} \sum_k \frac{\tilde{\mu}_{1,k} - \tilde{\mu}_{2,k}}{\tilde{\mu}_{2,k} - \tilde{\mu}_{1,k} ^2}
	\end{equation*}
and hence,
	\begin{equation*}
	\hat{\alpha}_k = (\sum_i \alpha_i) \tilde{\mu}_{1,k} = \frac{S}{N} \sum_n p_{n,k}.
	\end{equation*}
	\end{proof}

\section{Experimental settings}\label{appendix_experimental_settings}
	In this section, we support Section $4$ in the original paper with more detailed experimental settings.
	Our Tensorflow implementation is available at \texttt{https://TO$\_$BE$\_$RELEASED}.

\subsection{Dataset description}\label{appendix_dataset}
	We use the following benchmark datasets for the experiments in the original paper: 1) MNIST; 2) MNIST with rotations (MNIST$+$rot); 3) OMNIGLOT; and 4) SVHN with PCA transformation.
	MNIST \citep{LeCun98} is a hand-written digit image dataset of size $28\times28$ with $10$ labels, consists of $60,000$ training data and $10,000$ testing data.
	MNIST$+$rot data is reproduced by the authors of \citet{Nalisnick17} consists of MNIST and rotated MNIST\footnote{\texttt{http://www.iro.umontreal.ca/~lisa/twiki/bin/view.cgi/Public/MnistVariations}}.
	OMNIGLOT\footnote{\texttt{https://github.com/yburda/iwae/tree/master/datasets/OMNIGLOT}} \citep{Lake13,Sonderby16} is another hand-written image dataset of characters with $28\times28$ size and $50$ labels, consists of $24,345$ training data and $8,070$ testing data.
	SVHN\footnote{\texttt{http://ufldl.stanford.edu/housenumbers/}} is a Street View House Numbers image dataset with the dimension-reduction by PCA into $500$ dimensions \citep{Nalisnick17}.

\subsection{Representation learning of VAEs}\label{appendix_vae-framework}
	We divided the datasets into \{train,valid,test\} as the following: MNIST $=\{ 45,000:5,000:10,000 \}$ and OMNIGLOT $= \{ 22,095:2,250:8,070 \}$.

	For MNIST, we use $50$-dimension latent variables with two hidden layers in the encoder and one hidden layer in the decoder of $500$ dimensions.
	We set $\boldsymbol{\alpha} = 0.98 \cdot \bold{1}_{50}$ for the fair comparison to GVAEs using the Equation (\ref{eq_softmax_gaussian}). 
	The batch size was set to be $100$.
	For OMNIGLOT, we use $100$-dimension latent variables with two hidden layers in the encoder and one hidden layer in the decoder of $500$ dimensions.
	We assume $\boldsymbol{\alpha} = 0.99 \cdot \bold{1}_{100}$ for the fair comparison to the GVAEs using the Equation (\ref{eq_softmax_gaussian}).
	The batch size was set to be $15$.

	For both datasets, the gradient clipping is used; ReLU function \citep{Nair10} is used as an activation function in hidden layers; Xavier initialization \citep{Glorot10} is used for the neural network parameter initialization; and the Adam optimizer \citep{Kingma14a} is used as an optimizer with learning rate \texttt{5e-4} for all VAEs except \texttt{3e-4} for the SBVAEs.
	The prior assumptions for each VAE is the following: 1) $\mathcal{N} (\bold{0},\bold{I})$ for the GVAE and the GVAE-Softmax; 2) GEM$(5)$ for the SBVAEs; and 3) Dirichlet$(0.98 \cdot \bold{1}_{50})$ (MNIST) and Dirichlet$(0.99 \cdot \bold{1}_{100})$ (OMNIGLOT) for the DirVAE-Weibull.
	Finally, to compute the marginal log-likelihood, we used $100$ samples for each $1,000$ randomly selected from the test data.
	
	We add the result of VAE with 20 normalizing flows (GVAE-NF$20$) \citep{Rezende15} as a baseline in Table \ref{appendix_table_vae_hyperparameter_learning}.
	Also, latent dimension-wise decoder weight norm and t-SNE visualization on latent embeddings of MNIST is given in Figure \ref{fig_gvae-nf20_decoder_weight} and \ref{fig_gvae-nf20_tsne} which correspond to Figure \ref{fig_cc} and \ref{fig_qualitative_mnist}, respectively.
	
	Additionally, DirVAE-Learning use the same $\boldsymbol{\alpha}$ for the initial value, but the DirVAE-Learning optimizes hyper-parameter $\boldsymbol{\alpha}$ by the following stages through the learning iterations using the MME method in Appendix \ref{appendix_hyper-parameter}: 1) the burn-in period for stabilizing the neural network parameters; 2) the alternative update period for the neural network parameters and $\boldsymbol{\alpha}$; and 3) the update period for the neural network parameters with the fixed learned hyper-parameter $\boldsymbol{\alpha}$.
	Table \ref{appendix_table_vae_hyperparameter_learning} shows that there are improvements in the marginal log-likelihood, ELBO, and reconstruction loss with DirVAE-Learning in both datasets.
	We also give the learned hyper-parameter $\boldsymbol{\alpha}$ in Figure \ref{fig_learned_alphas}.

\begin{table}[h]
\centering
\caption{Negative log-likelihood, negative ELBO, and reconstruction loss of the VAEs for MNIST and OMNIGLOT dataset. The lower values are the better for all measures.}
\label{appendix_table_vae_hyperparameter_learning}
\resizebox{\linewidth}{!}{
	\begin{tabular}{lcccccc}
	& \multicolumn{3}{c}{MNIST ($K=50$)} & \multicolumn{3}{c}{OMNIGLOT ($K=100$)} \\
	\cmidrule(r){2-4}
	\cmidrule(r){5-7}
	& Neg. LL & Neg. ELBO & Reconst. Loss & Neg. LL & Neg. ELBO & Reconst. Loss \\
	\midrule
	GVAE \citep{Nalisnick17} & $96.80$ & $-$ & $-$ & $-$ & $-$ & $-$ \\
	SBVAE-Kuma \citep{Nalisnick17} & $98.01$ & $-$ & $-$ & $-$ & $-$ & $-$ \\
	SBVAE-Gamma \citep{Nalisnick17} & $100.74$ & $-$ & $-$ & $-$ & $-$ & $-$ \\
	\midrule
	GVAE & $94.54_{\pm0.79}$ & $\bold{98.58_{\pm0.04}}$ & $\bold{74.31_{\pm0.13}}$ & $119.29_{\pm0.44}$ & $126.42_{\pm0.24}$ & $\bold{98.90_{\pm0.36}}$ \\
	GVAE-Softmax & $98.18_{\pm0.61}$ & $103.49_{\pm0.16}$ & $79.36_{\pm0.82}$ & $130.01_{\pm1.16}$ & $139.73_{\pm0.81}$ & $123.34_{\pm1.43}$ \\
	GVAE-NF$20$ & $95.87_{\pm0.64}$ & $113.14_{\pm0.47}$ & $90.09_{\pm1.19}$ & $113.51_{\pm1.29}$ & $129.82_{\pm0.64}$ & $108.96_{\pm1.19}$ \\
	SBVAE-Kuma & $99.27_{\pm0.48}$ & $102.60_{\pm1.81}$ & $83.90_{\pm0.82}$ & $130.73_{\pm2.17}$ & $132.86_{\pm3.03}$ & $119.25_{\pm1.00}$ \\
	SBVAE-Gamma & $102.14_{\pm0.69}$ & $135.30_{\pm0.24}$ & $113.89_{\pm0.25}$ & $128.82_{\pm1.82}$ & $149.30_{\pm0.82}$ & $136.36_{\pm1.53}$ \\
	DirVAE-Weibull & $114.59_{\pm11.15}$ & $183.33_{\pm2.96}$ & $150.92_{\pm3.70}$ & $140.89_{\pm3.21}$ & $198.01_{\pm2.46}$ & $145.52_{\pm3.13}$ \\
	\midrule
	DirVAE & $\bold{87.64_{\pm0.64}}$ & $100.47_{\pm0.35}$ & $81.50_{\pm0.27}$ & $\bold{108.24_{\pm0.42}}$ & $\bold{120.06_{\pm0.35}}$ & $99.78_{\pm0.36}$ \\
	\midrule
	DirVAE-Learning & $\bold{84.42_{\pm0.53}}$ & $99.88_{\pm0.40}$ & $80.73_{\pm0.31}$ & $\bold{100.01_{\pm0.52}}$ & $\bold{119.73_{\pm0.31}}$ & $99.55_{\pm0.32}$ \\	
	\end{tabular}
}
\end{table}

\begin{figure}[h]
\centering
	\begin{subfigure}{.54\linewidth}
	\centering
	\includegraphics[width=0.6\linewidth]{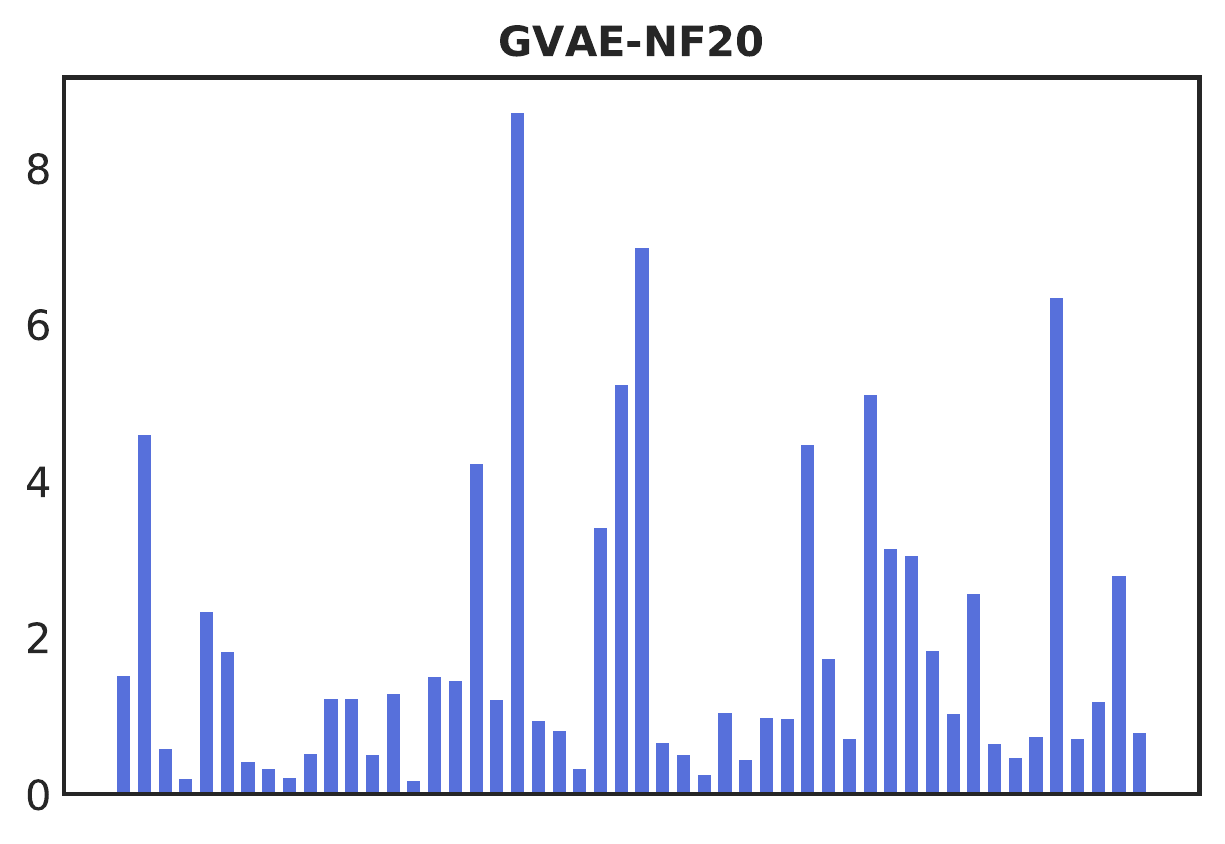}
	\caption{Latent dimension-wise $\text{L}2$-norm of decoder weights of GVAE-NF$20$.}
	\label{fig_gvae-nf20_decoder_weight}
	\end{subfigure}
	\begin{subfigure}{.44\linewidth}
	\centering
	\includegraphics[width=0.5\linewidth]{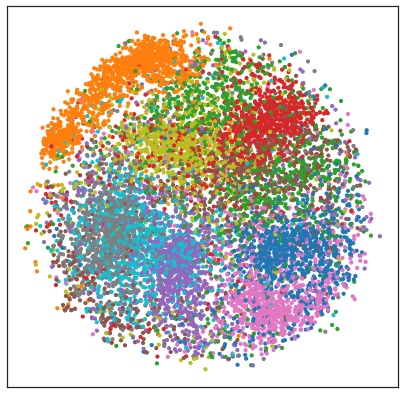}
	\caption{GVAE-NF$20$ t-SNE visualization.}
	\label{fig_gvae-nf20_tsne}
	\end{subfigure}
\caption{Decoder weight collapsing and t-SNE latent embeddings visualization of GVAE-NF$20$ on MNIST.}
\end{figure}

\begin{figure}[h]
\centering
\includegraphics[width=0.35\linewidth]{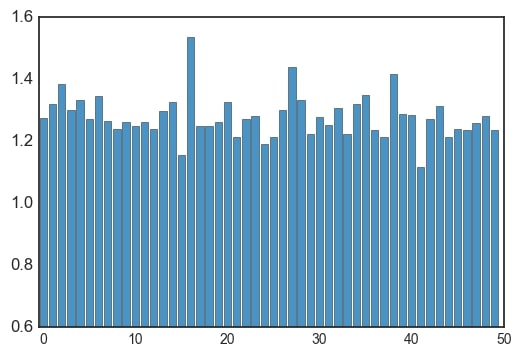}
\caption{The optimized dimension-wise $\boldsymbol{\alpha}$ values from DirVAE-Learning with MNIST.}
\label{fig_learned_alphas}
\end{figure}

\subsection{Semi-supervised classification task with VAEs}\label{appendix_semi-supervised}
	The overall model structure for this semi-supervised classification task uses a VAE with a separate random variable of $\bold{z}$ and $\bold{y}$, which is introduced as the \textit{M2} model in the original VAE work \citep{Kingma14b}.
	However, the same task with the SBVAE uses a different model modified to ignore the relation between the class label variable $\bold{y}$ and the latent variable $\bold{z}$, but they still share the same parent nodes: $q_{\phi} (\bold{z},\bold{y}|\bold{x}) = q_{\phi} (\bold{z}|\bold{x}) q_{\phi} (\bold{y}|\bold{x})$ where $q_{\phi} (\bold{y}|\bold{x})$ is a discrimitive network for the unseen labels.
	We follow the structure of SBVAE.
	Finally, the below are the objective functions to optimize for the labeled and the unlabeled instances of the semi-supervised classification task, respectively:
\begin{equation}\label{eq_labeled}
	\log p(\bold{x},\bold{y}) \geq \mathcal{L}_{\text{labeled}} (\bold{x},\bold{y}) = \mathbb{E}_{q_{\phi (\bold{z}|\bold{x})}} [\log p_{\theta} (\bold{x}|\bold{z},\bold{y})] - \text{KL}(q_{\phi} (\bold{z}|\bold{x}) || p_{\theta} (\bold{z})) + \log q_{\phi} (\bold{y}|\bold{x}) ~,
\end{equation}
\begin{equation}\label{eq_unlabeled}
	\log p(\bold{x}) \geq \mathcal{L}_{\text{unlabeled}} (\bold{x}) = \mathbb{E}_{q_{\phi (\bold{z},\bold{y}|\bold{x})}} [\log p_{\theta} (\bold{x}|\bold{z},\bold{y}) + \mathbb{H}(q_{\phi} (\bold{y}|\bold{x}))] - \text{KL}(q_{\phi} (\bold{z}|\bold{x}) || p_{\theta} (\bold{z})) ~.
\end{equation}

	In the above, $\mathbb{H}$ is an entropy function.
	The actual training on the semi-supervised learning optimizes the weighted sum of Equation (\ref{eq_labeled}) and (\ref{eq_unlabeled}) with a ratio hyper-parameter $0 < \lambda < 1$. 
	
	The datasets are divided into \{train, valid, test\} as the following: MNIST $=\{ 45,000 : 5,000 : 10,000 \}$, MNIST$+$rot $=\{ 70,000 : 10,000 : 20,000 \}$, and SVHN $=\{ 65,000 : 8,257 : 26,032 \}$.
	For SVHN, dimension reduction into $500$ dimensions by PCA is applied as preprocessing.

	Fundamentally, we applied the same experimental settings to GVAE, SBVAE and DirVAE in this experiment, as specified by the authors in \citet{Nalisnick17}.\footnote{\texttt{https://github.com/enalisnick/stick-breaking\_dgms}}\footnote{\texttt{https://www.ics.uci.edu/$\sim$enalisni/sb\_dgm\_supp\_mat.pdf}}
	Specifically, the three VAEs used the same network structures of 1) a hidden layer of $500$ dimension for MNIST; and 2) four hidden layers of $500$ dimensions for MNIST$+$rot and SVHN with the residual network for the last three hidden layers.
	The latent variables have $50$ dimensions for all settings.
	The ratio parameter $\lambda$ is set to be $0.375$ for the MNISTs, and $0.45$ for SVHN.
	ReLU function is used as an activation function in hidden layers, and the neural network parameters were initialized by sampling from $\mathcal{N} (0,0.001)$.
	The Adam optimizer is used with learning rate \texttt{3e-4} and the batch size was set to be $100$.
	Finally, the DirVAE sets $\boldsymbol{\alpha} = 0.98 \cdot \bold{1}_{50}$ by using Equation (\ref{eq_softmax_gaussian}).

\subsection{Supervised classification task with latent values of VAEs}\label{appendix_supervised}
	For the supervised classification task on the latent representation of the VAEs, we used exactly the same experimental settings as in \ref{appendix_vae-framework}.
	Since DLGMM is basically a Gaussian mixture model with the SBVAE, DLGMM is a more complex model than the VAE alternatives. We only report the authors' result from \citet{Nalisnick16} for the comparison purposes.
	Additionally, we omit the comparison with the VaDE \citep{Jiang17} because the VaDE is more customized to be a clustering model rather than the ordinary VAEs that we choose as baselines. 

\begin{table}[h]
\centering
\caption{The error rate of $k$NN with the latent representations of VAEs.}
\label{table_supervised_appendix}
\resizebox{0.85\linewidth}{!}{
	\begin{tabular}{lcccccc}	
	& \multicolumn{3}{c}{MNIST ($K=50$)} & \multicolumn{3}{c}{OMNIGLOT ($K=100$)} \\
	\cmidrule(r){2-4}
	\cmidrule(r){5-7}
	& $k=3$ & $k=5$ & $k=10$ & $k=3$ & $k=5$ & $k=10$ \\
	\midrule
	GVAE \citep{Nalisnick16} & $28.40$ & $20.96$ & $15.33$ & $-$ & $-$ & $-$ \\
	SBVAE \citep{Nalisnick16} & $9.34$ & $8.65$ & $8.90$ & $-$ & $-$ & $-$ \\
	DLGMM \citep{Nalisnick16} & $9.14$ & $8.38$ & $8.42$ & $-$ & $-$ & $-$ \\
	\midrule
	GVAE & $27.16_{\pm0.48}$ & $20.20_{\pm0.93}$ & $14.89_{\pm0.40}$ & $92.34_{\pm0.25}$ & $91.21_{\pm0.18}$ & $88.79_{\pm0.35}$ \\
	GVAE-Softmax & $25.68_{\pm2.64}$ & $21.79_{\pm2.17}$ & $18.75_{\pm2.06}$ & $94.76_{\pm0.20}$ & $94.22_{\pm0.37}$ & $92.98_{\pm0.42}$ \\
	GVAE-NF$20$ & $25.72_{\pm1.58}$ & $20.15_{\pm1.25}$ & $15.87_{\pm0.74}$ & $91.25_{\pm0.12}$ & $90.03_{\pm0.20}$ & $87.73_{\pm0.38}$ \\
	SBVAE & $10.01_{\pm0.52}$ & $9.58_{\pm0.47}$ & $9.39_{\pm0.54}$ & $86.90_{\pm0.82}$ & $85.10_{\pm0.89}$ & $82.96_{\pm0.64}$ \\
	\midrule
	DirVAE & $\bold{5.98_{\pm0.06}}$ & $\bold{5.29_{\pm0.06}}$ & $\bold{5.06_{\pm0.06}}$ & $\bold{76.55_{\pm0.23}}$ & $\bold{73.81_{\pm0.29}}$ & $\bold{70.95_{\pm0.29}}$ \\
	\midrule
	Raw Data & $3.00$ & $3.21$ & $3.44$ & $69.94$ & $69.41$ & $70.10$ \\
	\end{tabular}
}
\end{table}

\subsection{Topic model augmentation with DirVAE}\label{appendix_topic_model}
	For the topic model augmentation experiment, two popular performance measures in the topic model fields, which are perplexity and topic coherence via normalized pointwise mutual information (NPMI) \citep{Lau14}, have been used with \textit{20Newsgroups}\footnote{\texttt{https://github.com/akashgit/autoencoding$\_$vi$\_$for$\_$topic$\_$models}} and \textit{RCV1-v2}\footnote{\texttt{http://scikit-learn.org/stable/datasets/rcv1.html}} datasets.
	20Newsgroups has $11,258$ train data and $7,487$ test data with vocabulary size $1,995$.
	For the RCV1-v2 dataset, due to the massive size of the whole data, we randomly sampled $20,000$ train data and  $10,000$ test data with vocabulary size $10,000$.
	The lower is better for the perplexity, and the higher is better for the NPMI.

	The specific model structures can be found in the original papers, \citet{Srivastava17,Miao16,Miao17}.
	We replace the model prior to that of DirVAE to each model and search the hyper-parameter as Table \ref{table_topic_model_hyper-parameter_appendix} with $1,000$ randomly selected test data.
	We use $500$-dimension hidden layers and $50$ topics for 20Newsgroups, and $1,000$-dimension hidden layers and $100$ topics for RCV1-v2.

	Table \ref{sample_topic_word} shows top-$10$ high probability words per topic by activating single latent dimensions in the case of $20$Newsgroups.
	Also, we visualized the latent embeddings of documents by t-SNE in Figure \ref{fig_dirvae_20news_tsne},\ref{fig_sbvae_20news_tsne}, and \ref{fig_original_20news_tsne}.

\begin{table}[h]
\centering
\caption{Hyper-parameter selections for DirVAE augmentations.}
\label{table_topic_model_hyper-parameter_appendix}
\resizebox{0.75\linewidth}{!}{
	\begin{tabular}{lcccccc}
	& \multicolumn{3}{c}{20Newsgroups ($K=50$)} & \multicolumn{3}{c}{RCV1-v2 ($K=100$)} \\
	\cmidrule(r){2-4}
	\cmidrule(r){5-7}
	 & ProdLDA & NVDM & GSM & ProdLDA & NVDM & GSM  \\
	\midrule
	Add DirVAE & $0.98 \cdot \bold{1}_{50}$ & $0.95 \cdot \bold{1}_{50}$ & $0.20 \cdot \bold{1}_{50}$ & $0.99 \cdot \bold{1}_{100}$ & $0.90 \cdot \bold{1}_{100}$ & $0.01 \cdot \bold{1}_{100}$ \\
	\end{tabular}
}
\end{table}

\begin{table}[h]
\centering
\caption{Sample of learned per topic top-$10$ high probability words from $20$Newsgroups with DirVAE augmentation by activating single latent dimensions.}
\label{sample_topic_word}
	\begin{subtable}{\textwidth}
	\centering
	\begin{tabular}{ll}
	\multicolumn{2}{c}{ProdLDA$+$DirVAE} \\
	\midrule
	Topic $1$ & turks turkish armenian genocide village armenia armenians muslims turkey greece \\
	Topic $2$ & doctrine jesus god faith christ scripture belief eternal holy bible \\
	Topic $3$ & season defensive puck playoff coach score flyers nhl team ice \\
	Topic $4$ & pitcher braves hitter coach pen defensive injury roger pitch player \\
	Topic $5$ & ide scsi scsus controller motherboard isa cache mb floppy ram \\
	Topic $6$ & toolkit widget workstation xlib jpeg xt vendor colormap interface pixel \\
	Topic $7$ & spacecraft satellite solar shuttle nasa mission professor lunar orbit rocket \\
	Topic $8$ & knife handgun assault homicide batf criminal gun firearm police apartment \\
	Topic $9$ & enforcement privacy encrypt encryption ripem wiretap rsa cipher cryptography escrow \\
	Topic $10$ & min detroit tor det calgary rangers leafs montreal philadelphia cal \\
	\end{tabular}
	\end{subtable}
	\subcaption{DirVAE augmentation to ProdLDA}

	\begin{subtable}{\textwidth}
	\centering
	\begin{tabular}{ll}
	\multicolumn{2}{c}{NVDM$+$DirVAE} \\
	\midrule
	Topic $1$ & armenian azerbaijan armenia genocide armenians turkish militia massacre village turks \\
	Topic $2$ & arab arabs israeli palestinian jews soldier turks nazi massacre jew \\
	Topic $3$ & resurrection bible christianity doctrine scripture eternal belief christian faith jesus \\
	Topic $4$ & hitter season braves pitcher baseball pitch game player defensive team \\
	Topic $5$ & directory file compile variable update ftp version site copy host \\
	Topic $6$ & performance speed faster mhz rate clock processor average twice fast \\
	Topic $7$ & windows microsoft driver dos nt graphic vga card virtual upgrade \\
	Topic $8$ & seat gear rear tire honda oil front mile wheel engine \\
	Topic $9$ & patient disease doctor treatment symptom medical health hospital pain medicine \\
	Topic $10$ & pt la det tor pit pp vs van cal nj \\
	\end{tabular}
	\end{subtable}
	\subcaption{DirVAE augmentation to NVDM}

	\begin{subtable}{\textwidth}
	\centering
	\begin{tabular}{ll}
	\multicolumn{2}{c}{GSM$+$DirVAE} \\
	\midrule
	Topic $1$ & turkish armenian armenians people one turkey armenia turks greek history \\
	Topic $2$ & israel israeli jews attack world jewish article arab peace land \\
	Topic $3$ & god jesus christian religion truth believe bible church christ belief \\
	Topic $4$ & team play game hockey nhl score first division go win \\
	Topic $5$ & drive video mac card port pc system modem memory speed \\
	Topic $6$ & image software file version server program system ftp package support \\
	Topic $7$ & space launch orbit earth nasa moon satellite mission project center \\
	Topic $8$ & law state gun government right rights case court police crime \\
	Topic $9$ & price sell new sale offer pay buy good condition money \\
	Topic $10$ & internet mail computer send list fax phone email address information \\
	\end{tabular}
	\end{subtable}
	\subcaption{DirVAE augmentation to GSM}

\end{table}

\begin{figure}[h]
	\centering
	\includegraphics[width=0.325\linewidth]{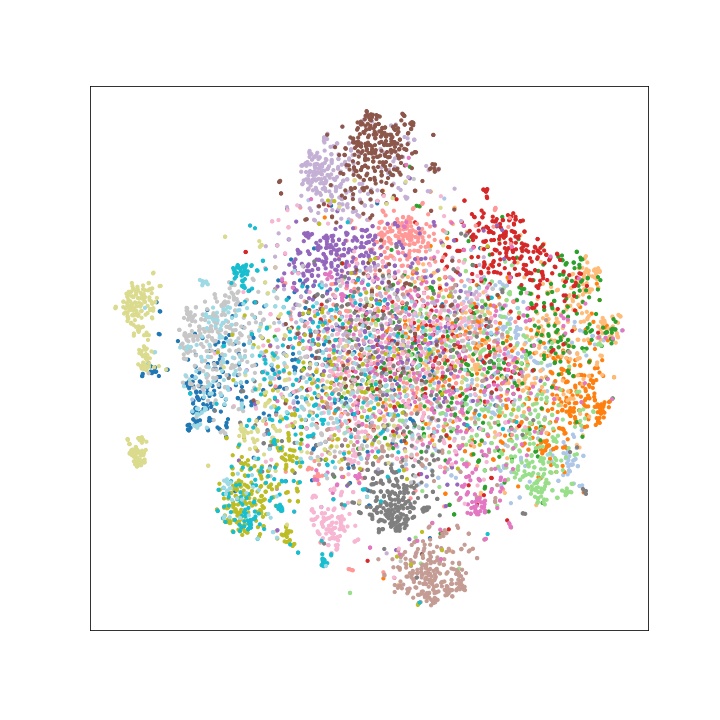}
	\includegraphics[width=0.325\linewidth]{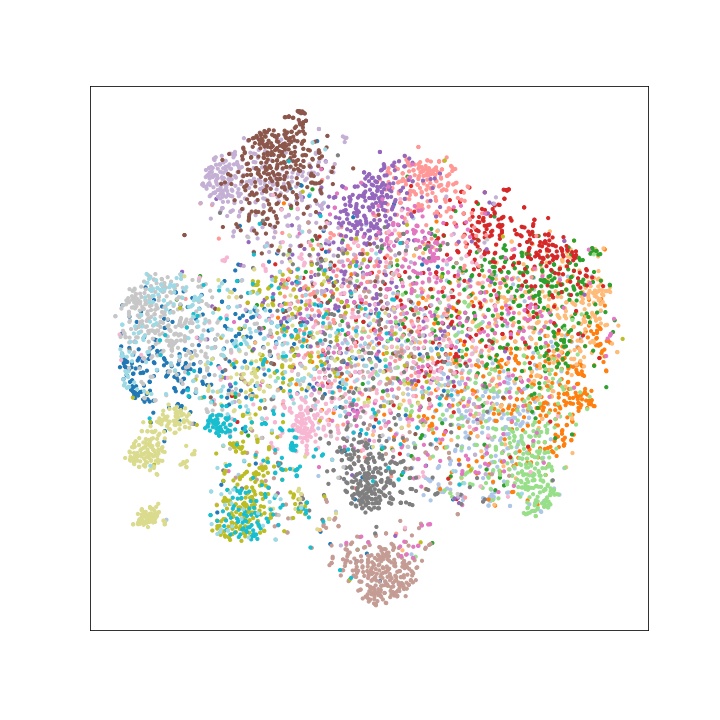}
	\includegraphics[width=0.325\linewidth]{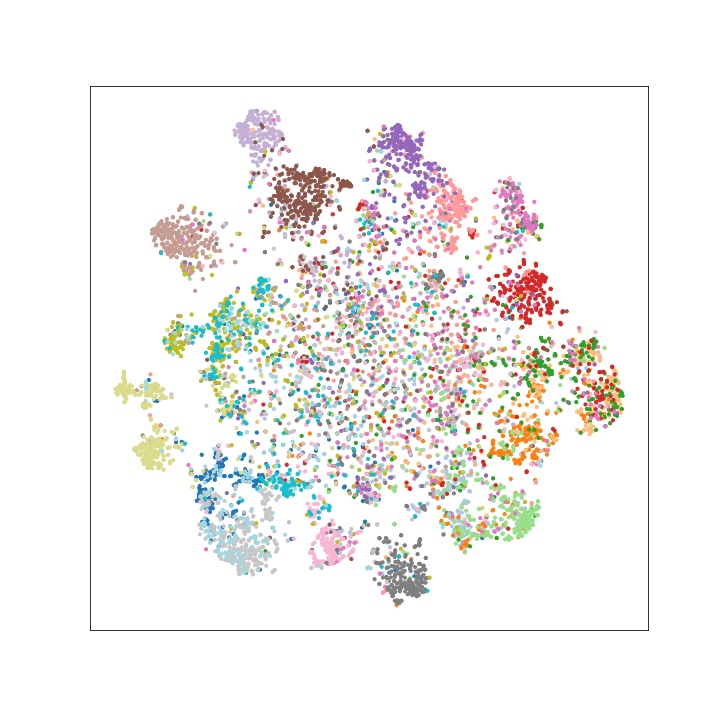}
	\caption{$20$Newsgroups latent document embedding visulaization with t-SNE by replacing the model prior to the Dirichlet. (Left) ProdLDA$+$DirVAE, (Middle) NVDM$+$DirVAE, (Right) GSM$+$DirVAE.}
	\label{fig_dirvae_20news_tsne}
\end{figure}

\begin{figure}[h]
	\centering
	\includegraphics[width=0.325\linewidth]{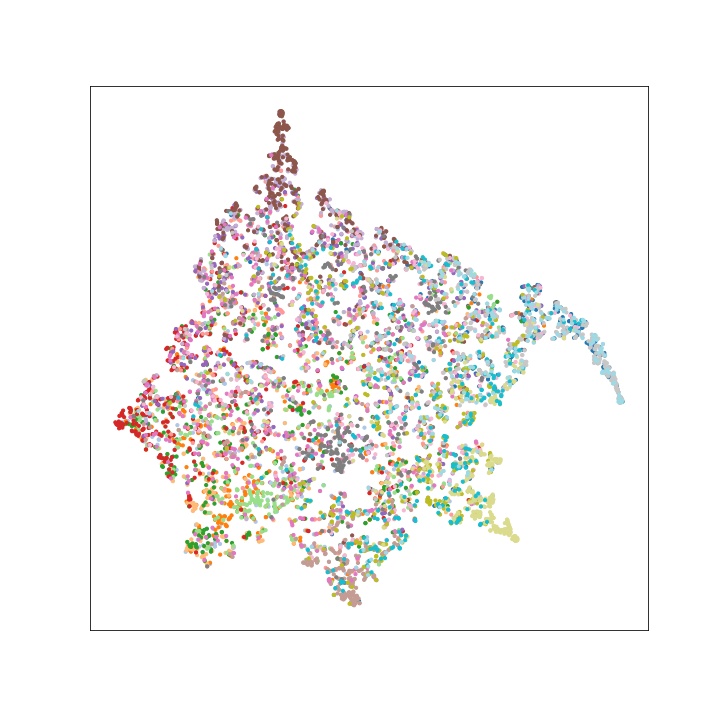}
	\includegraphics[width=0.325\linewidth]{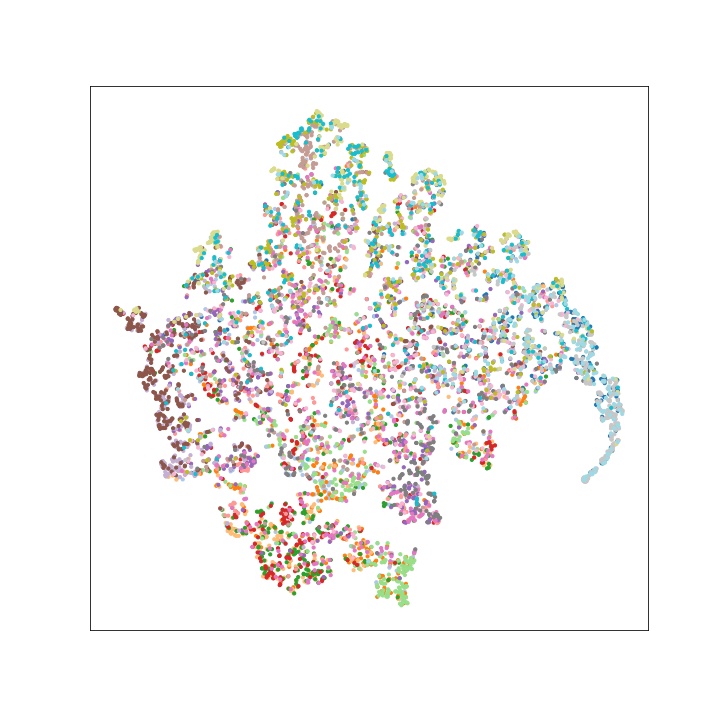}
	\includegraphics[width=0.325\linewidth]{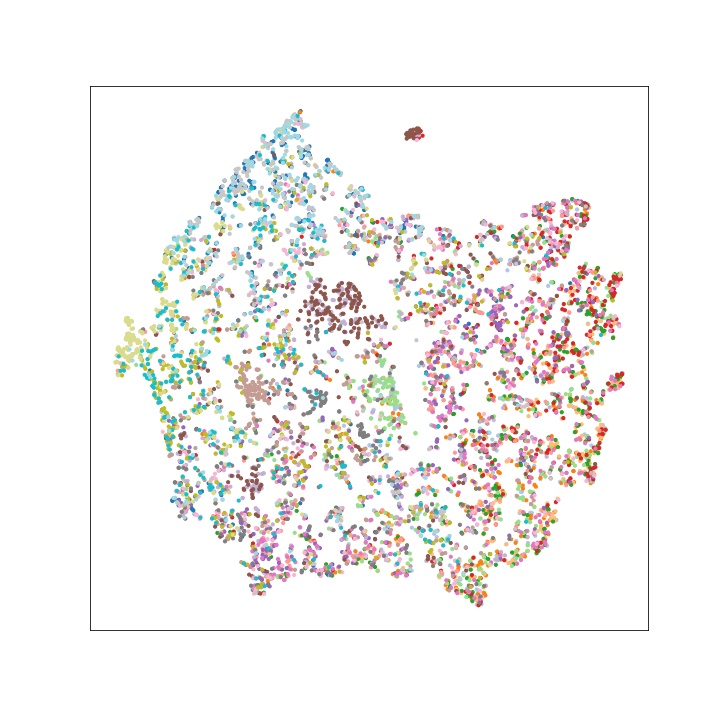}
	\caption{$20$Newsgroups latent document embedding visulaization with t-SNE by replacing the model prior to the Stick-Breaking. (Left) ProdLDA$+$SBVAE, (Middle) NVDM$+$SBVAE, (Right) GSM$+$SBVAE.}
	\label{fig_sbvae_20news_tsne}
\end{figure}

\begin{figure}[h]
	\centering
	\includegraphics[width=0.325\linewidth]{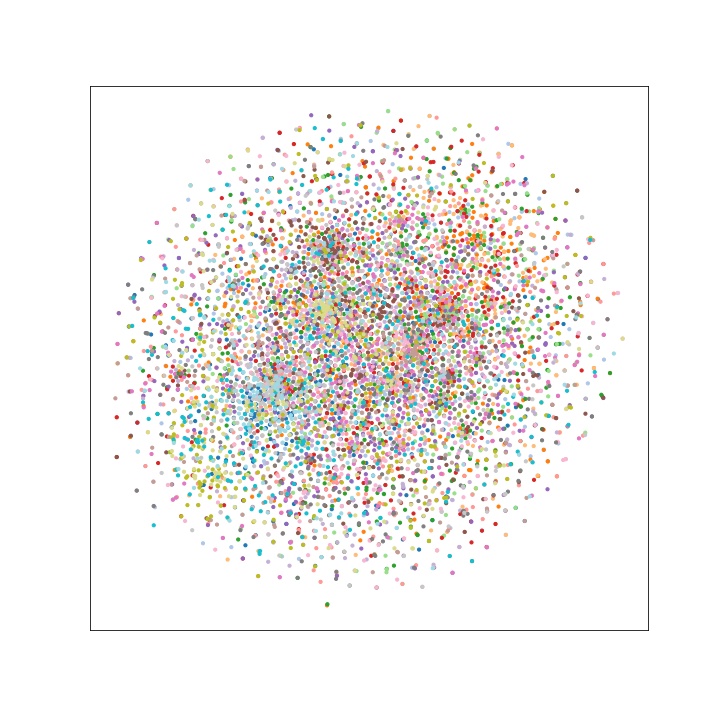}
	\includegraphics[width=0.325\linewidth]{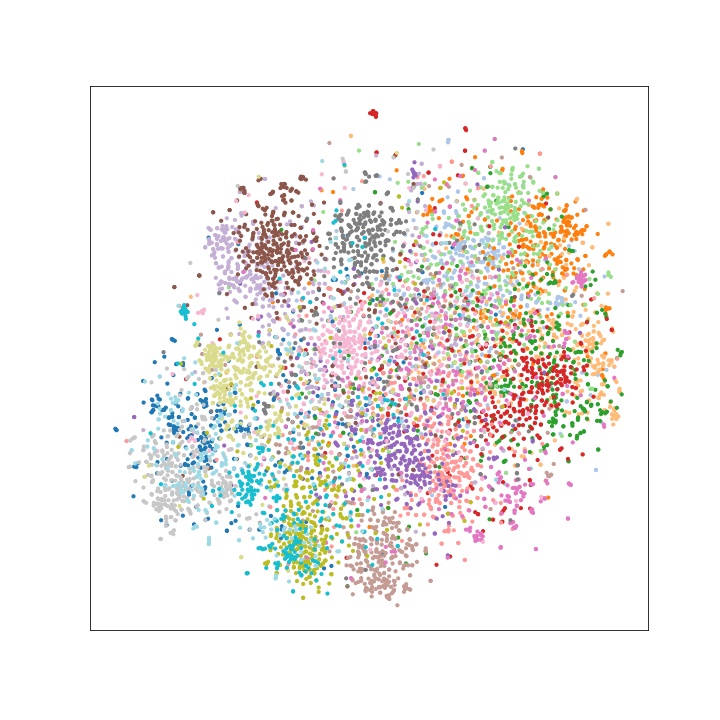}
	\includegraphics[width=0.325\linewidth]{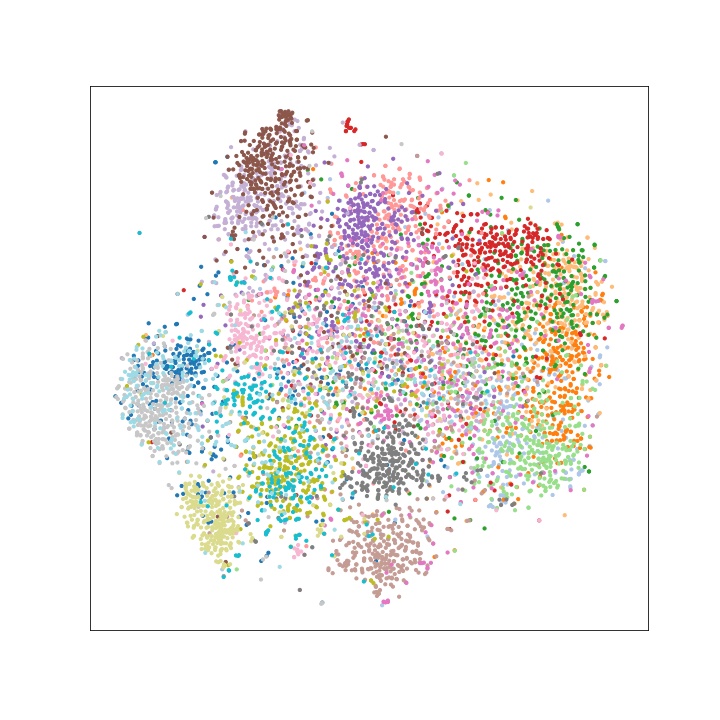}
	\caption{$20$Newsgroups latent document embedding visulaization with t-SNE of original models. (Left) ProdLDA, (Middle) NVDM, (Right) GSM.}
	\label{fig_original_20news_tsne}
\end{figure}

\end{document}